\documentclass[numbers,compress]{article}

\usepackage[preprint]{neurips_2026}

\usepackage[utf8]{inputenc}
\usepackage[T1]{fontenc}

\usepackage{url}
\usepackage{booktabs}
\usepackage{amsfonts}
\usepackage{nicefrac}
\usepackage{microtype}
\usepackage{xcolor}
\usepackage{amsmath,amssymb,amsthm,mathtools,bm}
\usepackage{enumitem}
\usepackage{graphicx}
\usepackage{float}
\usepackage{subcaption}
\definecolor{fan-red}{HTML}{D62727}

\definecolor{xinyue-blue}{HTML}{1F77B4}

\usepackage{xurl}
\definecolor{bluegray}{rgb}{0.4, 0.6, 0.8}
\definecolor{lightcarminepink}{rgb}{0.9, 0.4, 0.38}
\usepackage[pagebackref,breaklinks,colorlinks,citecolor=brown,linkcolor=lightcarminepink]{hyperref}

\usepackage{cleveref}
\crefname{assumption}{Assumption}{Assumptions}
\Crefname{assumption}{Assumption}{Assumptions}
\crefname{lemma}{Lemma}{Lemmas}
\Crefname{lemma}{Lemma}{Lemmas}
\crefname{corollary}{Corollary}{Corollaries}
\Crefname{corollary}{Corollary}{Corollaries}
\theoremstyle{plain}
\newtheorem{theorem}{Theorem}
\newtheorem{lemma}{Lemma}
\theoremstyle{definition}
\newtheorem{assumption}{Assumption}

\theoremstyle{remark}

\theoremstyle{plain}
\newcommand{\indep}{\perp\!\!\!\perp}
\newcommand{\zgt}{\bm{u}}
\newcommand{\zest}{\bm{z}}
\newcommand{\atil}{\tilde{\bm{a}}}
\renewcommand{\aa}{\bm{a}}
\renewcommand{\ss}{\bm{s}}
\newcommand{\xx}{\bm{x}}
\newcommand{\vv}{\bm{m}}     
\newcommand{\hh}{h}
\newcommand{\ff}{f}
\newcommand{\ww}{\bm{w}}
\newcommand{\FD}{\mathcal{F}}
\newcommand{\EE}{\mathcal{E}}
\renewcommand{\SS}{\mathcal{S}}
\newcommand{\Render}{R}       
\newcommand{\IDM}{I_\phi}     
\newcommand{\E}{\mathbb{E}}
\newcommand{\R}{\mathbb{R}}
\newcommand{\KL}{D_{\mathrm{KL}}}
\definecolor{scarred}{RGB}{180,75,75}
\definecolor{scarblue}{RGB}{60,110,180}
\definecolor{scargreen}{RGB}{65,140,95}

\title{SCAR: Self-Supervised Continuous Action Representation Learning}

\author{
Hongjia Liu$^{1, 2}$, Fan Feng$^{1}$, Minghao Fu$^{1}$, Xinyue Wang$^{1}$, Haofei Lu$^{2}$, Biwei Huang$^{1}$ \\
$^1$University of California, San Diego \\
$^2$KTH Royal Institute of
Technology
}

\begin{document}

\maketitle

\begin{abstract}
Despite the central role of action in embodied intelligence, learning transferable action representations from visual transitions remains a fundamental challenge, particularly when world models must generalize across embodiments under limited data. We argue that action is not merely an auxiliary conditioning signal, but a distinct representational factor that decouples the controllable change from embodiment-specific actuation. In this work, we propose SCAR, a joint inverse–forward dynamics framework for learning unified action representations across embodiments from visual transitions. Built on a pretrained generative backbone, SCAR uses an inverse dynamics model (IDM) to infer latent actions from latent observation pairs and a forward dynamics model (FDM) to predict future dynamics conditioned on them. To make the latent space transferable rather than a generic visual bottleneck, we regularize the latent action posterior toward a standard Gaussian prior to limit arbitrary visual encoding, and introduce adversarial invariance to suppress embodiment- and environment-specific nuisance factors. Experiments on the Procgen and Robotwin dataset show that the learned unified latent action representation serves as a stronger conditioning interface for world modeling than embodiment-specific raw actions, yielding improved cross-embodiment low-data adaptation and cross-task transfer. Taken together, these results suggest that action can be learned as a shared representation of controllable change across embodiments, providing an interface for more transferable and generalizable world models.
\end{abstract}

\section{Introduction}
Embodied agents must represent not only what is present in the scene but also how the body can act to change it. In cognitive science and neuroscience, this action-oriented aspect is discussed through the notion of \emph{body schema}: a sensorimotor representation that supports action planning and control, distinct from purely perceptual or semantic descriptions of the body~\citep{gallagher2006body,ataria2021body,sattin2023overview,hoffmann2010body}. Such representations are not rigidly tied to a single morphology; they can adapt under tool use~\citep{maravita2004tools,cardinali2009tool}, and studies of imitation suggest that action content can be partially dissociated from the specific limb used to realize it~\citep{chaminade2005fmri,caspers2010ale,van2011imitation}. 

In embodied AI, however, the action signal used to train world models is usually tied to a specific body and command interface. Recorded actions are imperfect proxies for the physical interventions that drive state change: control dynamics, contact, delays, and calibration differences mediate command execution, so the same nominal action can yield different transitions across embodiments or controllers~\citep{suomalainen2022survey,lynch2017modern}. 
This issue is especially problematic for cross-embodiment world modeling, where the shared intervention structure is easily obscured by substantial embodiment-specific realization details~\citep{o2024openX}. 
This raises a central question: can we condition world models on an action representation learned from visual transitions, one that captures how an agent intervenes in the world without being tied to a specific body or raw command space?

\begin{figure}[H]
    \centering
    \includegraphics[width=0.9\linewidth]{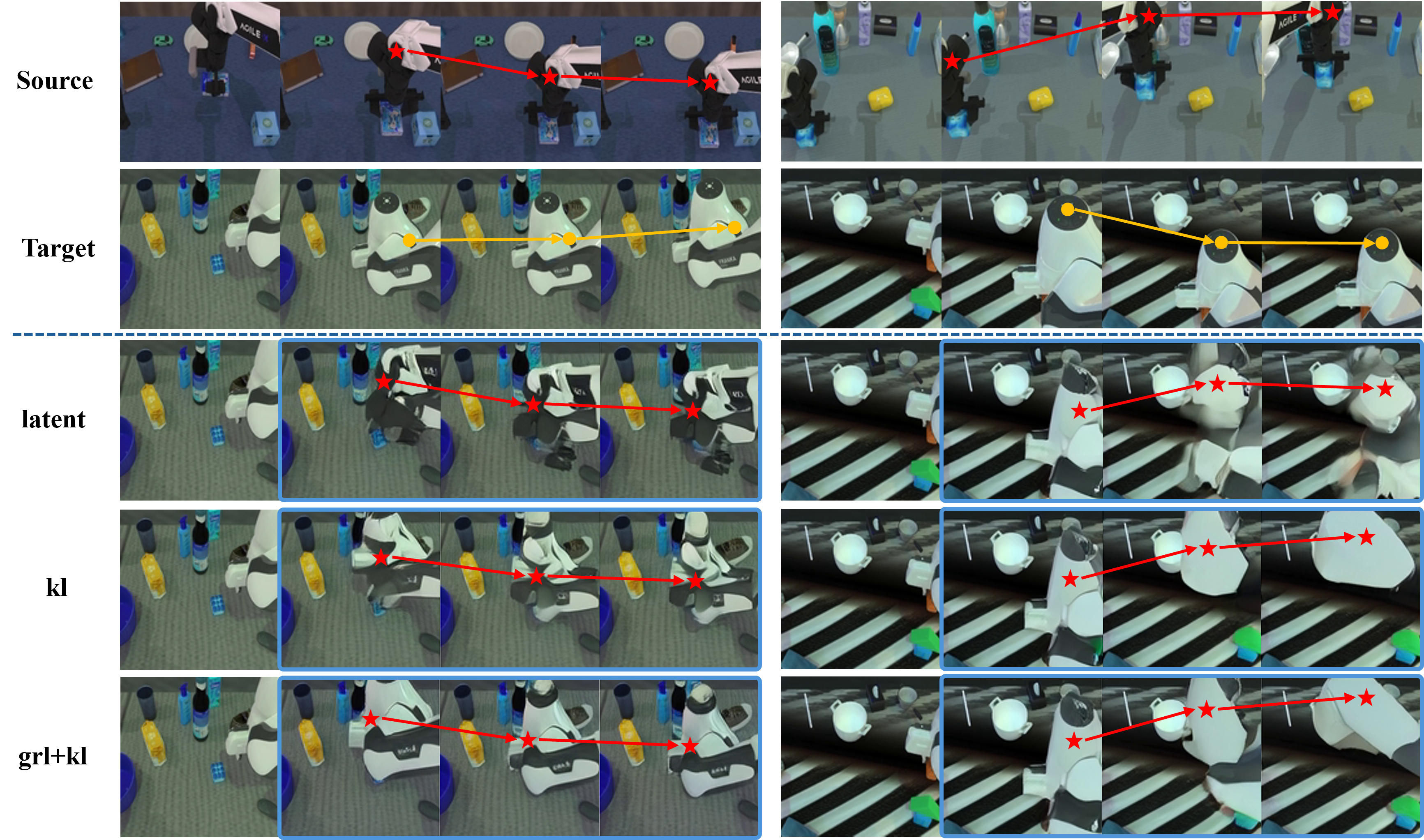}
\caption{\textbf{Cross-embodiment action transfer.}
Latent actions from a source ALOHA trajectory are applied to a target Franka context. 
KL limits visual shortcuts, and GRL reduces embodiment leakage; KL+GRL best preserves the target embodiment while transferring the source motion structure.}\label{fig:cross_embodiment_transfer}
\end{figure}
\vspace{-0.5cm}

Recent latent-action models~\citep{lapo,lapa, nikulinbackground,liang2025clam,garrido2026inthewild} learn action-like representations from visual transitions following an inverse-forward dynamics framework: an inverse dynamics model (IDM) infers latent action from visual transitions, while a forward dynamics model (FDM) is trained to reconstruct future observations conditioned on these latent actions. Many methods aim to unlock unlabeled video data for policy learning by learning compact action codes with little or no action supervision~\citep{lapo,lapa}. In contrast, our goal is to learn a unified action representation that can serve as a transferable conditioning interface for world models across embodied domains. However, learning such a representation from video is challenging because the latent may encode appearance, task identity, or embodiment-specific statistics instead of controllable change~\citep{zhanglatent}. 
It is further exacerbated when the FDM must learn visual dynamics from scratch, as reconstruction losses can dominate action abstraction; see Appendix~\ref{app:fdm_prior_ablation}.

We therefore propose \textbf{SCAR}, a self-supervised inverse--forward dynamics framework for learning \emph{unified continuous action representations across embodiments}. 
The key idea is to treat action as the part of a visual transition that explains controllable change, rather than as the embodiment-specific command used to produce it. 
A useful latent action should therefore retain the information needed for future prediction, while avoiding two shortcuts: it should not simply compress future visual details, and it should not encode embodiment-specific appearance or actuation statistics. The representation should also connect back to control, since a transition-side latent inferred from future observations is only useful at deployment if it can be recovered 
from raw commands and the current visual context.

SCAR implements these desiderata in the latent space of a pretrained causal VAE. 
The IDM infers a stochastic latent action $\bm{z}_t$ from a realized visual transition, while the FDM, initialized from the Wan video generative backbone~\citep{wan2025wan}, predicts future latent dynamics conditioned on $\bm{z}_t$, forming a latent-action-conditioned world model.
To prevent  above two shortcuts, we introduce KL regularization to limit the information capacity of the latent action posterior, and adversarial invariance via gradient reversal~\citep{ganin2015grl} to remove embodiment-identifying factors.
Because $\bm{z}_t$ is inferred from future observations during representation learning, we further train a lightweight context-conditioned action-to-latent controller that maps raw action sequences and visual context back into the learned latent action space. 
This separates representation discovery from embodiment-specific command realization: SCAR first learns what controllable change is shared across embodiments, and then learns how each embodiment realizes it through its own commands.

We evaluate the resulting latent action interface on Procgen \cite{cobbe2020procgen} and Robotwin \cite{chen2025robotwin}. Procgen tests environment transfer with a shared discrete action space, while Robotwin evaluates cross-embodiment transfer in continuous-control manipulation. Experiments show that the learned latent representation provides a stronger world-model conditioning interface than raw actions, improving low-data adaptation and transfer-task generalization. Fig.~\ref{fig:cross_embodiment_transfer} further shows that latent actions inferred from one embodiment can drive another embodiment's visual context, suggesting that SCAR learns transferable action structure across bodies. We also show that controllability is recovered more effectively through context-conditioned sequence-level action-to-latent prediction than through direct raw-action conditioning. Together, these results support a simple claim: the transferable part of action is not the embodiment-specific command, but the intervention structure that explains how the world changes, and this representation enables more generalizable world models.

\textbf{Contributions.}
Our contributions are threefold. 
(1) We formulate cross-embodiment world-model conditioning as a transition-side action representation problem, replacing embodiment-specific raw commands with a learned interface grounded in realized visual change. 
(2) We propose \textbf{SCAR}, an inverse--forward latent dynamics framework that learns compact and embodiment-invariant continuous latent actions from visual transitions. 
(3) We show that the learned latent interface improves low-data prediction and transfer, reduces embodiment leakage, and can be connected back to raw commands through an action-to-latent controller.
\section{Related Work}
\textbf{Action-conditioned world models.}
Robotic world models typically condition future prediction on recorded actions, either in image space~\cite{chi2025wow}, latent visual space~\cite{zhoudinowm}, or learned dynamics representations~\cite{maes2026leworldmodel,hafner2025dreamerv4}. Such action-conditioned models provide an interface for planning~\cite{maes2026leworldmodel,goswami2025dexwm} and policy learning~\cite{chandradiwa,bi2025motus,yang2026rise}, but recorded actions are tied to a specific embodiment, controller, and action parameterization~\cite{o2024openX}. SCAR instead treats the conditioning interface itself as a representation learning problem, using transition-side latent actions to condition a generative world model.

\textbf{Latent action learning from visual transitions.}
Recent latent-action methods infer action-like codes from visual transitions with an inverse dynamics model and train a forward model or video predictor to ensure that these codes explain future observations~\cite{lapo, lapa,nikulinbackground}. Many methods learn discrete action tokens, which can be limiting for fine-grained continuous manipulation. Continuous latent-action methods such as CLAM~\cite{liang2025clam}, VILA~\cite{jeong2026vila}, and VILLA-X~\cite{chen2025villa} introduce richer action spaces, but often rely on action labels, proprioception, or other supervision to anchor the latent. Other works scale latent-action learning with in-the-wild videos or VLA pretraining~\cite{garrido2026inthewild,luo2026JALA}. However, transition prediction alone can yield latents that encode appearance, task, or embodiment shortcuts rather than controllable change~\cite{lachapelle2025identifiability,zhanglatent}. SCAR addresses this issue through capacity regularization and embodiment-adversarial invariance, aiming to make the latent action more suitable for cross-embodiment world-model conditioning.

\textbf{Cross-embodiment robot learning.}
Large-scale robot learning must handle heterogeneous morphologies, sensors, and control spaces~\cite{o2024openX}. Existing approaches reduce this heterogeneity through multi-embodiment pretraining~\cite{intelligence2025pi05,bjorck2025gr00t, bauer2025latentdiffusion}, manually defined shared command abstractions such as human hand skeletons~\cite{yang2025egovla,li2025unidex,kareer2025egomimic}, or adversarial alignment of visual representations~\cite{cai2025InNOn,wang2024cross}. These methods primarily align policies, visual features, or predefined state--action interfaces. In contrast, SCAR learns the action-conditioning interface itself as a transition-side continuous latent representation, directly from realized visual dynamics across embodiments.
\section{Method}
\subsection{Problem Formulation: Learning a Shared Latent Action Interface}

We consider trajectories collected from multiple embodiments,
$\mathcal{D}=\{(\bm{x}_{1:T}, \bm{a}^{e}_{1:T-1}, e)\}$, where
$\bm{x}_{1:T}$ is a video sequence, $\bm{a}^{e}_{1:T-1}$ is the
embodiment-specific command sequence, and $e\in\mathcal{E}$ denotes the
embodiment identity. In cross-embodiment world modeling, raw commands are often
a poor common interface: the same command can induce different physical effects
under different morphologies, controllers, delays, and contact dynamics, while different actions may produce similar visual transitions. We therefore seek an inferred unified latent action $\bm{z}_t$ that is learned from realized visual transitions and captures
the controllable change shared across embodiments.

From a data-generation perspective, the key distinction is between the shared
effect of an action and the embodiment-specific way in which this effect is
realized. We write $\bm{u}_t$ for the unobserved \emph{ground-truth unified latent action} (the embodiment-invariant controllable factor in the data-generating process); $\bm{z}_t$ is the model's \emph{estimation} of $\bm{u}_t$, recovered from visual transitions and used to condition the world model. A raw command can be viewed as an embodiment-specific realization
$\bm{a}^{e}_t=h(\bm{u}_t,e)$, which drives the transition $\bm{s}_{t+1}=\mathcal{F}(\bm{s}_t,\bm{a}^{e}_t)$ and gives the
observation $\bm{x}_t=R(\bm{s}_t)$. An inverse dynamics model can therefore recover from
$(\bm{x}_t,\bm{x}_{t+1})$ an action coordinate $\tilde{\bm{a}}_t=I_\phi(\bm{x}_t,\bm{x}_{t+1})$ that explains
the realized change. However, since $\tilde{\bm{a}}_t$ is recovered through the
embodiment-specific transition, it may still contain embodiment-dependent bias.
The goal of SCAR is to keep the transition-predictive part of this coordinate
while removing the part that identifies the embodiment with a discriminative objective, yielding the inferred unified latent
action $\bm{z}_t$ for world-model conditioning.

In principle, SCAR gives the sufficient conditions for recovering unified latent actions: under a structured data-generation process, prediction can
recover an action coordinate from visual change, and adversarial embodiment
removal can select the shared controllable component. The full formal statement
and proof are provided in Appendix~\ref{app:proofs}.

\begin{theorem}[Informal: sufficient conditions for unified latent action recovery]
\label{thm:latent_action_recovery}
Suppose each trajectory is generated from embodiment-invariant ground-truth unified latent actions $\bm{u}_t$ through an embodiment-specific realization $\bm{a}^{e}_t=h(\bm{u}_t,e)$, followed
by $\bm{s}_{t+1}=\mathcal{F}(\bm{s}_t,\bm{a}^{e}_t)$ and $\bm{x}_t=R(\bm{s}_t)$. Assume these maps are injective on the data support. If the IDM recovers the raw action $\bm{a}^{e}_t$ up to an invertible reparameterization,
$\tilde{\bm{a}}_t=\rho(\bm{a}^{e}_t)$, and this recovered action coordinate can be decomposed
into a shared controllable component and an embodiment-specific component, then
an encoder trained with a population-level
discriminative objective removes the embodiment-specific component and
recovers the unified latent action space up to a per-embodiment invertible bijection
$\bm{z}_t = \Phi_e(\bm{u}_t)$. Therefore, $\bm{z}_t$ provides a unified latent action
interface for cross-embodiment world models.
\end{theorem}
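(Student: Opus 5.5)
The plan is to formalize the informal statement as a composition of injective maps, and then to read the discriminative objective as a population-level independence constraint that picks out the shared block. I will write $g_e := \Render\circ\FD(\ss_t,\cdot)\circ \hh(\cdot,e)$ for the map from $\zgt_t$ (at fixed context $\ss_t$) to $\xx_{t+1}$. Each factor is injective on the data support, so $g_e$ is injective. Given the current frame, the transition $(\xx_t,\xx_{t+1})$ therefore determines $\aa^e_t$, and $\aa^e_t$ in turn determines $(\zgt_t,e)$.

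The first step uses the IDM hypothesis $\atil_t=\rho(\aa^e_t)$ to get $\atil_t=\rho(\hh(\zgt_t,e))=:\psi(\zgt_t,e)$, where $\psi$ is injective in the pair $(\zgt_t,e)$. The decomposition assumption is then stated precisely: there is a bijection $\atil_t\mapsto(\bm{c}_t,\bm{n}_t)$ such that the shared component satisfies $\bm{c}_t=\Phi'_e(\zgt_t)$ with $\Phi'_e$ invertible, and the nuisance component $\bm{n}_t$ is the part that carries the dependence on $e$.

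The second step handles the encoder $\zest_t=\ww(\atil_t)$, trained jointly for prediction and adversarially against an embodiment classifier. At the population optimum, with a Bayes-optimal discriminator, the adversarial term is minimized iff $I(\zest_t;e)=0$, i.e.\ $\zest_t\indep e$. The prediction term requires $\zest_t$, together with $\xx_t$ and $e$ (which is recoverable from context), to determine $\xx_{t+1}$, i.e.\ to lose no information about $\zgt_t$. I will then argue two things.
\begin{itemize}
\item Sufficiency: any encoder that depends only on $\bm{c}_t$ and is injective achieves both optima. It is independent of $e$, provided the distribution of $\zgt_t$ is $e$-invariant, which is the meaning of an ``embodiment-invariant'' $\zgt_t$ that I will add as an explicit assumption. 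It is also predictive, since $\zgt_t$ plus $e$ determines $\xx_{t+1}$.
\item Necessity: any optimal encoder must factor through $\bm{c}_t$ injectively. The predictive requirement forces $\zest_t$ to be injective in $\zgt_t$ for each $e$. If $\zest_t$ also depended on $\bm{n}_t$ in a way correlated with $e$, independence would fail.
\end{itemize}
Together these give $\zest_t=\Phi_e(\zgt_t)$ with $\Phi_e$ invertible on the support.

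The main obstacle is the necessity direction. Independence $\zest_t\indep e$ is a distributional statement and does not by itself rule out $\zest_t$ mixing $\bm{n}_t$ in an $e$-dependent but distribution-preserving way. The conclusion only claims a per-embodiment bijection $\Phi_e$, however, so such mixing is harmless as long as $\zest_t$ remains an invertible function of $\zgt_t$ for each fixed $e$. I would therefore prove exactly this weaker conclusion. For each $e$, $\bm{n}_t$ is a deterministic function of $(\zgt_t,e)$, so $\zest_t$ restricted to embodiment $e$ is a function of $\zgt_t$ alone. Predictive sufficiency makes that function injective, and a continuity/invariance-of-domain argument, with dimension matching assumed, makes it a bijection onto its image. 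This yields the stated recovery up to $\Phi_e$ without needing uniqueness of the decomposition.
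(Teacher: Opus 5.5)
Your route is genuinely different from the paper's. The paper models the shared/embodiment-specific split with a von Mises--Fisher law for $\atil$ with embodiment-dependent centers $\vv_e$. It restricts the Stage-2 encoder to a linear map $\zest=M\atil$ and uses the gradient-reversal cross-entropy alone. The duality $\min_\omega\mathcal{L}_{\mathrm{CE}}=H(e)-I(e;\zest)$ gives $\zest\indep e$. An MGF computation shows that the law of $M\atil$ given $e$ depends on $e$ only through $M\vv_e$, so invariance together with $\mathrm{rank}(M)=d_z$ forces $\mathrm{row}(M)=V^{\perp}$, i.e.\ $\ker M=V$. Bijectivity of $\Phi_e=M\circ\rho\circ\hh(\cdot,e)$ then comes from the \emph{assumed} non-collapse clause of Assumption~\ref{ass:enc} plus invariance of domain.

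You instead allow a nonlinear encoder, put the prediction loss into the Stage-2 objective, and \emph{derive} per-embodiment injectivity from predictive sufficiency. The decomposition serves only to show that some encoder attains both optima at once. This buys generality (no vMF, no linearity) and an actual mechanism for non-collapse, which the paper attributes only informally to the prediction loss. It gives up the structural reading of ``removes the embodiment-specific component'': you obtain distributional independence but cannot say which directions of $\atil$ are discarded, whereas the paper identifies them as exactly $V$.

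Two steps need repair. First, your existence step carries real weight: without it the joint optimum may trade prediction against invariance and attain neither individual optimum. As justified, it fails. With $\bm{c}_t=\Phi'_e(\zgt_t)$ and $\Phi'_e$ depending on $e$, the law of $\bm{c}_t$ given $e$ is $(\Phi'_e)_{\#}\,p(\zgt)$, which can vary with $e$ (an $e$-dependent shift suffices). So $\zgt\indep e$ does not make an encoder of $\bm{c}_t$ independent of $e$. Build $\bm{c}_t\indep e$ into the decomposition hypothesis. Also state explicitly that $e$ is identifiable from $\xx_t$, since an $e$-invariant code can be predictive only if the predictor can recover $e$. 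Once repaired, this step exhibits an encoder that is simultaneously invariant and non-collapsed, which is more than the paper checks for its $M_0$ (only invariance is verified there).

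Second, zero prediction loss only separates values of $\zgt_t$ that occur at a common state $\ss_t$, whereas $\Phi_e$ does not depend on $\ss_t$. You therefore need an overlap condition, e.g.\ that the support of $\zgt_t$ given $\ss_t$ does not depend on $\ss_t$. Because the objective is an expectation, you also need an argument upgrading almost-sure injectivity to injectivity on all of $\mathcal{Z}$ before invoking invariance of domain. The paper avoids both issues by assuming non-collapse outright.
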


Theorem~\ref{thm:latent_action_recovery} provides a data-generation view of a unified latent action. 
We instantiate this principle end-to-end in the latent space of a pretrained video model: the IDM infers a stochastic latent action sequence \(\bm{z}_{1:T-1}\) from latent video transitions, and the FDM predicts future latent dynamics conditioned on it. 
However, prediction alone is insufficient, since a predictive latent may still encode future visual details or embodiment-specific nuisance information. 
Thus, Sec.~\ref{sec:latent_action_learning} introduces the inverse--forward learning objective, while Sec.~\ref{sec:shortcut_latents} adds KL and GRL regularization: KL limits the capacity of \(\bm{z}_t\) to prevent unrestricted visual coding, and GRL discourages embodiment-discriminative information. 
Together, these components preserve transition-predictive controllable information while suppressing embodiment-specific shortcuts.

\subsection{Learning Predictive Latent Actions with an Inverse--Forward Model}
\label{sec:latent_action_learning}

As illustrated in Fig.~\ref{fig:framework}, SCAR operates in the latent space of a pretrained Wan video generation model~\citep{wan2025wan}. Given an input video window \(\bm{x}_{1:T}\), we first encode it with a frozen causal VAE,
$\bm{v}_{1:F} = E_{\mathrm{vae}}(\bm{x}_{1:T})$,
where \(\bm{v}_{1:F}\) denotes the temporally compressed latent representation. Operating in this latent space provides a structured representation of visual dynamics while avoiding the redundancy of raw pixels.

The inverse dynamics model infers a transition-level latent action sequence from the latent video:
\begin{equation}
    \bm{z}_{1:T-1} = I_{\phi}(\bm{v}_{1:F}).
\end{equation}
We parameterize the IDM output as a diagonal Gaussian posterior,
$q_{\phi}(\bm{z}_{1:T-1} \mid \bm{v}_{1:F})
=
\prod_{t=1}^{T-1}
\mathcal{N}\!\left(\bm{z}_t;\bm{\mu}_t,\mathrm{diag}(\bm{\sigma}_t^2)\right)$,
and sample \(\bm{z}_t = \bm{\mu}_t + \bm{\sigma}_t \odot \bm{\epsilon}_t\), with \(\bm{\epsilon}_t \sim \mathcal{N}(0,I)\), during training. At inference time, we use the posterior mean \(\bm{\mu}_t\).

The FDM predicts future latent dynamics conditioned on the inferred latent action. We map the frame-rate latent action sequence \(\bm{z}_{1:T-1}\) to a conditioning sequence aligned with the latent-video,
\begin{equation}
\bm{c}_{1:F} = A_{\psi}(\bm{z}_{1:T-1}),
\end{equation}
and inject \(\bm{c}_f\) into the pretrained Wan backbone through adaptive layer normalization:
\begin{equation}
(\bm{\beta}_f,\bm{\gamma}_f)=W_{\mathrm{mod}}\bm{c}_f,
\qquad
\texttt{AdaLN}(\bm{h};\bm{c}_f)=(1+\bm{\gamma}_f)\odot \texttt{LN}(\bm{h})+\bm{\beta}_f.
\end{equation}
This inverse--forward coupling forces the latent action to carry information that is useful for predicting the realized transition.

\begin{figure}[t]
    \centering
    \includegraphics[width=0.95\linewidth]{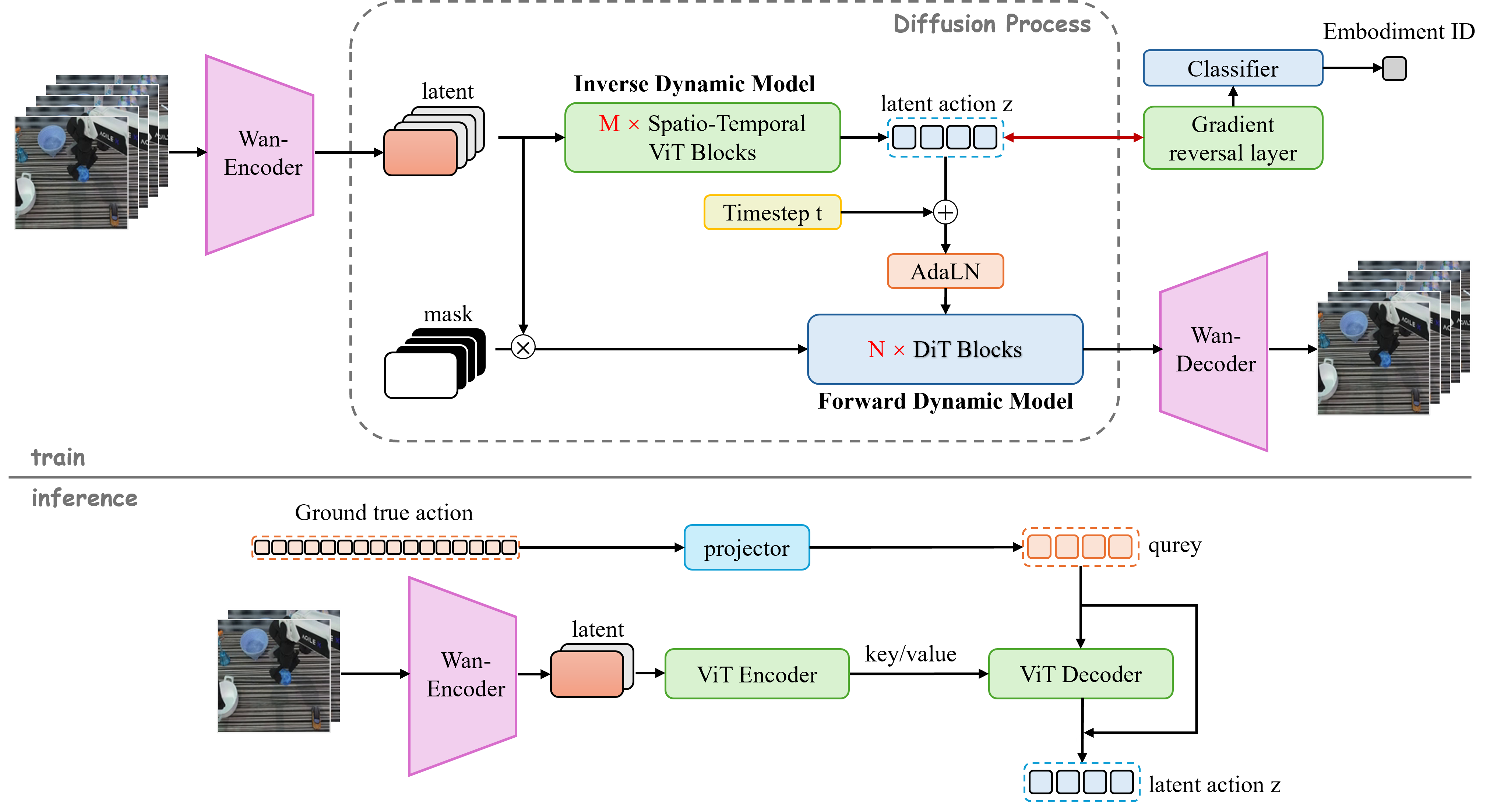}
    \caption{\textbf{Overview of SCAR}. 
During training, a frozen Wan encoder maps video frames into latent space; the IDM infers stochastic latent actions, and the FDM predicts future latent dynamics conditioned on them. 
KL constrains latent capacity and GRL suppresses embodiment-specific information. 
At inference time, an action-to-latent controller maps raw commands and visual context to latent actions for controllable prediction.}
    \label{fig:framework}
\end{figure}

\subsection{Preventing Shortcut Latents: Compactness and Embodiment Invariance}
\label{sec:shortcut_latents}

As suggested by Theorem~\ref{thm:latent_action_recovery}, prediction alone does
not guarantee that the latent action captures controllable change. In
cross-embodiment data, the IDM may reduce reconstruction loss by encoding
shortcuts such as embodiment identity, robot appearance, camera configuration,
or task-specific visual context. SCAR therefore constrains the latent action
with two complementary regularizers: a GRL objective that removes
embodiment-discriminative information, and a KL objective that limits the
capacity of the latent code.

\textbf{\textcolor{scarred}{Embodiment invariance via gradient reversal.}}
To suppress embodiment-specific shortcuts, we attach an adversarial embodiment classifier to the latent action. Each latent token is passed through a gradient reversal layer before classification, and the adversarial loss is defined as:
\begin{equation}
    \hat{e}_{i,t}
    = D_{\omega}(R_{\alpha}(\bm{z}_{i,t})),
\qquad
    \mathcal{L}_{\mathrm{GRL}}
    =
    \frac{1}{|\Omega|}
    \sum_{(i,t)\in\Omega}
    \mathrm{CE}(\hat{e}_{i,t},e_i).
\end{equation}
The classifier is trained to predict the embodiment identity, while the IDM
receives the reversed gradient and is encouraged to remove
embodiment-discriminative information from $\bm{z}_t$. Appendix~\ref{app:proofs} provides a
sufficient-condition analysis showing that, under idealized structured-nuisance
assumptions and a non-collapsed predictive encoder, the adversarial objective
removes embodiment-dependent directions while preserving a transition-predictive
latent subspace.

\textbf{\textcolor{scarblue}{Compactness via KL regularization.}}
To prevent $\bm{z}_t$ from becoming an unconstrained visual compression code, we
regularize the posterior toward a standard Gaussian prior:
\begin{equation}
    \mathcal{L}_{\mathrm{KL}}
    =
    \frac{1}{T-1}
    \sum_{t=1}^{T-1}
    D_{\mathrm{KL}}
    \Big(
    q_{\phi}(\bm{z}_t \mid \bm{v}_{1:F})
    \,\|\, \mathcal{N}(0,I)
    \Big).
\end{equation}
This limits the capacity of the latent action and makes it harder to hide
nuisance visual or embodiment-specific details in a high-dimensional code.

\textbf{Why the two regularizers are complementary.}
For a fixed latent-action encoder, the optimal embodiment classifier in the GRL
branch satisfies
\begin{equation}
    \min_{\omega} \mathcal{L}_{\mathrm{GRL}}(\omega,\phi)
    =
    H(e\mid \bm{z})
    =
    H(e)-I(e;\bm{z}).
\end{equation}
Thus, reversing the gradient of $L_{\mathrm{GRL}}$ encourages the encoder to
reduce the mutual information between the latent action $\bm{z}_t$ and the embodiment
identity $e$. However, adversarial invariance alone is not sufficient: a
collapsed latent would also make $I(e;\bm{z})$ small. The prediction loss prevents
this trivial solution by forcing $\bm{z}_t$ to explain the realized transition, while
$L_{\mathrm{KL}}$ limits latent capacity and $L_{\mathrm{GRL}}$ removes
embodiment-discriminative information.

In summary, the three pressures play distinct roles:
\textcolor{scargreen}{\textbf{prediction }}(Sec.~\ref{sec:latent_action_learning}) keeps $\bm{z}_t$ transition-relevant,
\textcolor{scarblue}{\textbf{KL}} makes $\bm{z}_t$ compact, and
\textcolor{scarred}{\textbf{GRL}} makes $\bm{z}_t$ embodiment-invariant. Together,
they encourage a latent action that captures controllable change rather than
visual or embodiment-specific shortcuts. 
\subsection{Recovering Controllability from Raw Commands}

The transition-side latent action is inferred from visual changes during representation learning. At inference time, however, a controllable world model must be driven by raw command sequences. We therefore learn a context-conditioned action-to-latent controller that maps raw actions and visual context to the learned latent action space:
\begin{equation}
\bm{h}_{1:F_{\mathrm{ctx}}}
=
E_{\xi}\!\left(\bm{v}_{1:F_{\mathrm{ctx}}}\right),
\qquad
\hat{\bm{z}}_{1:T-1}
=
D_{\eta}\!\left(\bm{a}_{1:T-1}, \bm{h}_{1:F_{\mathrm{ctx}}}\right),
\end{equation}
where \(\bm{v}_{1:F_{\mathrm{ctx}}}\) denotes observed context latents, \(E_{\xi}\) encodes them as key-value memory, and \(D_{\eta}\) is a transformer decoder that internally projects raw actions as query inputs to predict latent actions.

The controller is trained to match the frozen IDM latent actions:
\begin{equation}
\mathcal{L}_{\mathrm{A2L}}
=
\frac{1}{|\Omega|}
\sum_{t\in\Omega}
\left\|
\hat{\bm{z}}_t - \mathrm{sg}(\bm{\mu}_{\phi,t})
\right\|_2^2 ,
\end{equation}
where \(\bm{\mu}_{\phi,t}\) is the posterior mean inferred by the IDM and \(\mathrm{sg}(\cdot)\) denotes stop-gradient.

This sequence-level design avoids requiring pointwise alignment between raw commands and latent actions. Such alignment can be unstable because raw commands are mediated by embodiment-specific kinematics, controllers, delays, and contact dynamics. Sec.~\ref{sec:action_interface_comparison} empirically compares this sequence-level interface with pointwise action-to-latent prediction and direct raw-action conditioning.
\subsection{Training Objective}

Given a latent video tensor \(\bm{v}\), we sample a diffusion timestep \(\tau\) and noise \(\bm{\epsilon}\), and form
\begin{equation}
\tilde{\bm{v}}_{\tau}=(1-\sigma_{\tau})\bm{v}+\sigma_{\tau}\bm{\epsilon},
\qquad
\bm{u}_{\tau}=\bm{\epsilon}-\bm{v}.
\end{equation}
The action-conditioned FDM predicts the flow-matching velocity
\(\hat{\bm{u}}_{\tau}=g_{\theta}(\tilde{\bm{v}}_{\tau},\tau,\bm{z}_{1:T-1})\),
where \(\bm{u}_{\tau}\) denotes the diffusion flow target and is distinct from the unified latent action in Sec.~\ref{sec:latent_action_learning}. We train it with
$\mathcal{L}_{\mathrm{rec}}
=
\mathbb{E}_{\bm{x}_{1:T},\tau,\bm{\epsilon}}
\left[
\|\hat{\bm{u}}_{\tau}-\bm{u}_{\tau}\|_2^2
\right]$.
The overall objective is
\begin{equation}
\mathcal{L}_{\mathrm{total}}
=
\mathcal{L}_{\mathrm{rec}}
+
\beta \mathcal{L}_{\mathrm{KL}}
+
\lambda_{\mathrm{adv}}\mathcal{L}_{\mathrm{GRL}},
\end{equation}
where \(\mathcal{L}_{\mathrm{rec}}\) enforces predictive sufficiency, \(\mathcal{L}_{\mathrm{KL}}\) encourages compactness, and \(\mathcal{L}_{\mathrm{GRL}}\) discourages embodiment-specific shortcuts.







\section{Experiments}
\subsection{Benchmarks and Protocols}

We evaluate SCAR on two embodied world-modeling benchmarks: Robotwin for robotic cross-embodiment manipulation and Procgen for virtual-embodiment transfer in procedurally generated control environments. Robotwin evaluates transfer across heterogeneous robot morphologies and continuous control interfaces, while Procgen tests whether the same latent-action principle applies to visually and dynamically diverse virtual agents under a shared discrete action interface.

\textbf{Robotwin.}
We use \texttt{place\_a2b\_left} from Robotwin~\citep{chen2025robotwin} as the main robotic manipulation task. We consider four embodiments:$\mathcal{E}=\{\texttt{aloha-agilex},\ \texttt{arx-x5},\ \texttt{franka},\ \texttt{ur5}\}.$
Franka is used as the low-data target embodiment with only \(m=10\) target training episodes, while the remaining three embodiments provide 300 source episodes each. Models are trained on \texttt{place\_a2b\_left}. We evaluate both target-task prediction on held-out \texttt{place\_a2b\_left} episodes and zero-shot transfer-task prediction on \texttt{place\_a2b\_right}.

\textbf{Procgen.}
We evaluate virtual-embodiment transfer on Procgen~\citep{cobbe2020procgen}. Although Procgen environments share a discrete action interface, they differ in agent appearance, motion dynamics, interaction rules, layouts, and task structure. We consider two groups: $\mathcal{G}_1=\{\texttt{caveflyer},\ \texttt{chaser},\ \texttt{ninja}\}, 
\quad
\mathcal{G}_2=\{\texttt{heist},\ \texttt{jumper},\ \texttt{miner}\}$.
For each group, we use leave-one-environment-out low-data adaptation: one environment is held out as the target with \(m=10\) training episodes and 50 held-out evaluation episodes, while the remaining environments provide 300 source episodes each.

\textbf{Implementation and metrics.}
For both benchmarks, we encode \(T=49\)-frame video windows with a frozen Wan causal VAE and predict \(32\) future frames. The IDM is a \(6\)-layer spatiotemporal transformer with latent action dimension \(d_z=64\), and the FDM is initialized from pretrained Wan2.1 1.3B weights and conditioned on latent actions through AdaLN modulation. Models are trained with AdamW for \(10{,}000\) steps using an effective global batch size of \(16\); additional hyperparameters and temporal-alignment details are provided in Appendix~\ref{app:implementation_details}. For Robotwin, raw actions are padded to a shared \(16\)-dimensional interface; for Procgen, raw-action baselines use one-hot discrete actions. We report SSIM, PSNR, MSE, and SSIM-L, where SSIM-L denotes last-frame SSIM.

\subsection{Latent-Interface Transfer Evaluation}
We first evaluate whether the learned latent action provides a better conditioning interface for world modeling. For latent-action methods, the IDM infers latent actions from evaluation transitions, and the FDM predicts future dynamics conditioned on these inferred latents. This setting is a representation diagnostic: it measures whether the latent action explains realized transitions and transfers across domains. It is not used as a deployable control interface; controllability from raw actions is evaluated separately in Sec.~\ref{sec:action_interface_comparison}.

\textbf{Compared methods.}
We compare raw-action and latent-action conditioning under target-only and shared-data settings. \textbf{Target-Only-GT} uses only target-domain data with ground-truth actions, while \textbf{Shared-GT} uses both source and target data with ground-truth actions. \textbf{Target-Only-Latent} and \textbf{Shared-Latent} use IDM-inferred latent actions under the same data settings. We further evaluate three SCAR variants: \textbf{SCAR-kl} with KL regularization, \textbf{SCAR-grl} with GRL-based adversarial invariance, and \textbf{SCAR-kl-grl} with both objectives.


\begin{table}[t]
\centering
\small
\setlength{\tabcolsep}{4.2pt}
\caption{Transfer evaluation across embodied domains under the low-data target setting (\(m=10\)). 
(a) Procgen evaluates embodiment transfer across generated control environments. 
(b) Robotwin evaluates robotic cross-embodiment transfer, with target-task evaluation on \texttt{place\_a2b\_left} and zero-shot transfer-task evaluation on \texttt{place\_a2b\_right}. All results are averaged over three seeds.} 
\label{tab:transfer_eval_all}
\vspace{0.3em}
\textbf{(a) Procgen virtual-embodiment transfer.}
\vspace{0.3em}

\begin{tabular}{lcccccccc}
\toprule
& \multicolumn{4}{c}{\(\mathcal{G}_1\): \texttt{caveflyer/chaser/ninja}}
& \multicolumn{4}{c}{\(\mathcal{G}_2\): \texttt{heist/jumper/miner}} \\
\cmidrule(lr){2-5} \cmidrule(lr){6-9}
Method
& SSIM \(\uparrow\) & PSNR \(\uparrow\) & MSE \(\downarrow\) & SSIM-L \(\uparrow\)
& SSIM \(\uparrow\) & PSNR \(\uparrow\) & MSE \(\downarrow\) & SSIM-L \(\uparrow\) \\
\midrule
Target-Only-GT
& 0.421 & 13.28 & 0.0476 & 0.356
& 0.374 & 12.74 & 0.0562 & 0.302 \\
Shared-GT
& 0.493 & 14.06 & 0.0398 & 0.424
& 0.451 & 13.52 & 0.0469 & 0.372 \\
Target-Only-Latent
& 0.448 & 13.61 & 0.0439 & 0.381
& 0.398 & 13.01 & 0.0523 & 0.327 \\
Shared-Latent
& 0.565 & 15.02 & 0.0321 & 0.505
& 0.526 & 14.46 & 0.0374 & 0.458 \\
SCAR-kl
& 0.579 & 15.18 & 0.0309 & 0.522
& 0.545 & 14.74 & 0.0352 & 0.481 \\
SCAR-grl
& 0.572 & 15.09 & 0.0315 & 0.514
& 0.536 & 14.61 & 0.0361 & 0.470 \\
SCAR-kl-grl
& \textbf{0.594} & \textbf{15.37} & \textbf{0.0293} & \textbf{0.538}
& \textbf{0.563} & \textbf{15.03} & \textbf{0.0329} & \textbf{0.506} \\
\bottomrule
\end{tabular}

\vspace{0.3em}

\textbf{(b) Robotwin robotic cross-embodiment transfer.}
\vspace{0.3em}

\begin{tabular}{lcccccccc}
\toprule
& \multicolumn{4}{c}{Target task} 
& \multicolumn{4}{c}{Transfer task} \\
\cmidrule(lr){2-5} \cmidrule(lr){6-9}
Method 
& SSIM \(\uparrow\) & PSNR \(\uparrow\) & MSE \(\downarrow\) & SSIM-L \(\uparrow\)
& SSIM \(\uparrow\) & PSNR \(\uparrow\) & MSE \(\downarrow\) & SSIM-L \(\uparrow\) \\
\midrule
Target-Only-GT     
& 0.536 & 15.69 & 0.0270 & 0.511
& 0.573 & 16.47 & 0.0226 & 0.543 \\
Shared-GT          
& 0.713 & 16.70 & 0.0214 & 0.670
& 0.731 & 17.14 & 0.0193 & 0.683 \\
Target-Only-Latent 
& 0.536 & 16.18 & 0.0241 & 0.508
& 0.581 & 17.01 & 0.0199 & 0.544 \\
Shared-Latent      
& 0.743 & 17.99 & 0.0159 & 0.721
& 0.756 & 18.26 & 0.0149 & 0.729 \\
SCAR-kl            
& 0.752 & 18.27 & 0.0149 & 0.727
& 0.763 & 18.51 & 0.0141 & 0.738 \\
SCAR-grl           
& 0.745 & 18.01 & 0.0158 & 0.720
& 0.761 & 18.42 & 0.0144 & 0.733 \\
SCAR-kl-grl        
& \textbf{0.759} & \textbf{18.49} & \textbf{0.0142} & \textbf{0.735}
& \textbf{0.770} & \textbf{18.70} & \textbf{0.0135} & \textbf{0.747} \\
\bottomrule
\end{tabular}
\end{table}

\begin{figure}[t]
    \centering
    \includegraphics[width=0.9\linewidth]{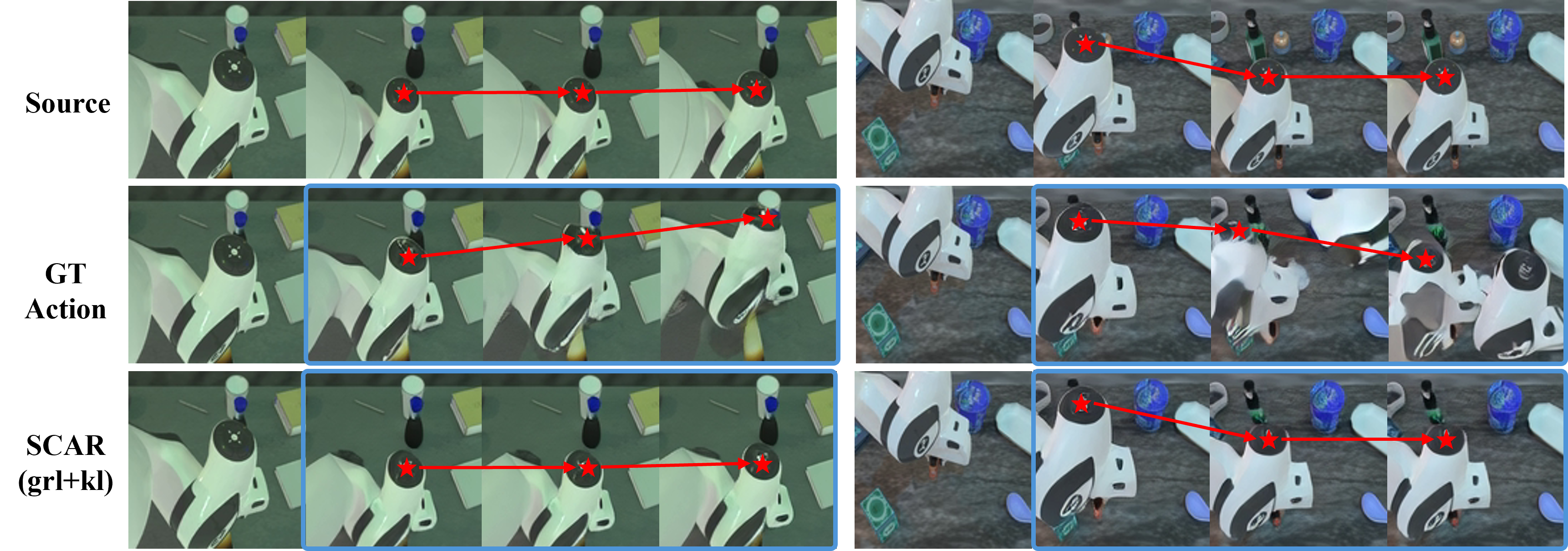}
    \caption{Qualitative zero-shot cross-task generation on the target embodiment. Under task shift, the GT-action-conditioned model produces hallucinated robot structures, while SCAR with KL+GRL better preserves the target embodiment and follows the transferred action trajectory.}
    \label{fig:cross_task_qualitative}
\end{figure}

\begin{figure}[t]
\centering
\begin{minipage}[t]{0.48\linewidth}
    \centering
    \includegraphics[width=0.9\linewidth]{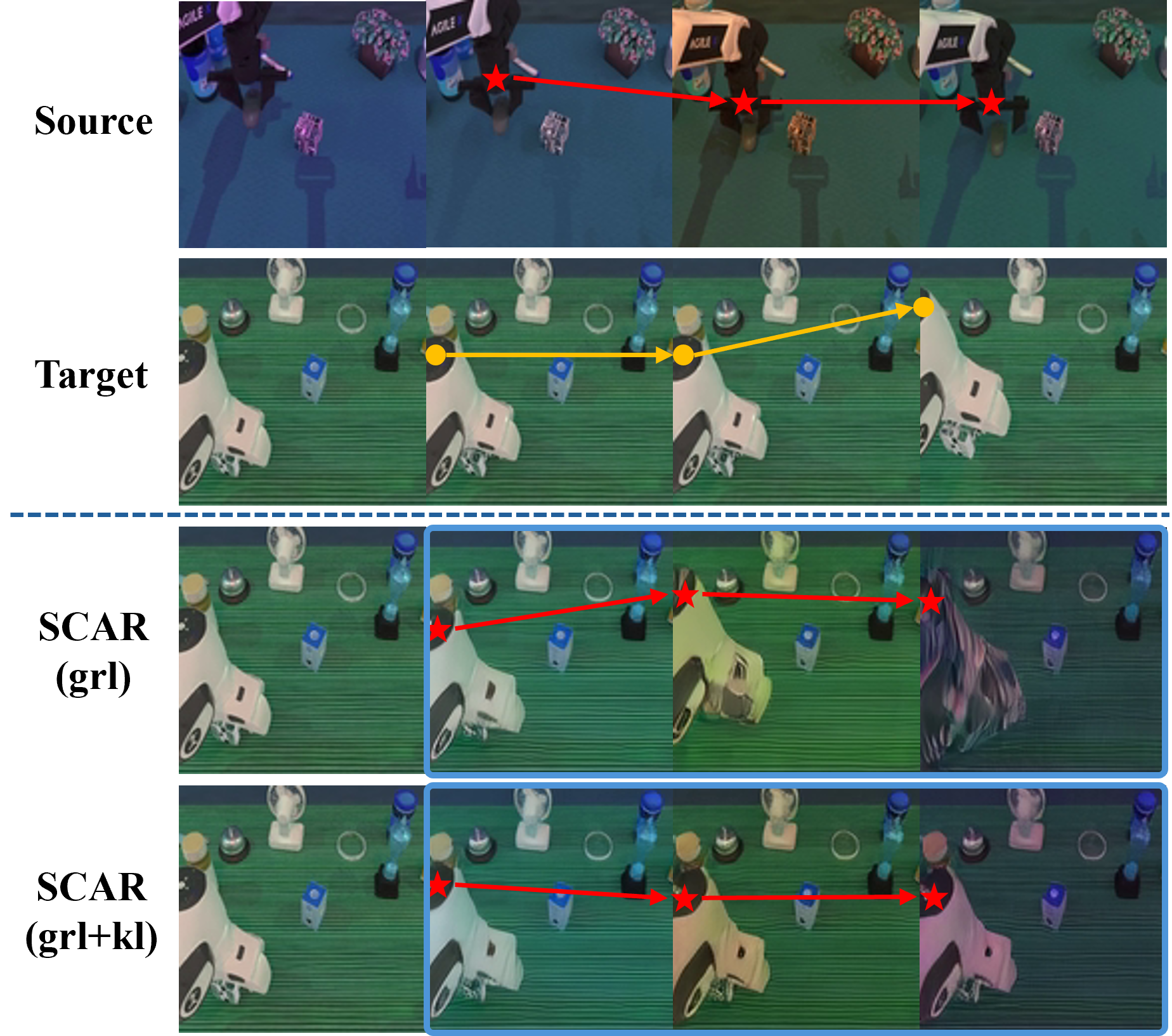}
    \caption{Qualitative ablation under nuisance visual variation from changing lighting. GRL alone retains scene variation, while KL regularization suppresses redundant appearance cues and improves cross-embodiment transfer.}
    \label{fig:ablation_KL}
\end{minipage}
\hfill
\begin{minipage}[t]{0.48\linewidth}
    \centering
    \includegraphics[width=\linewidth]{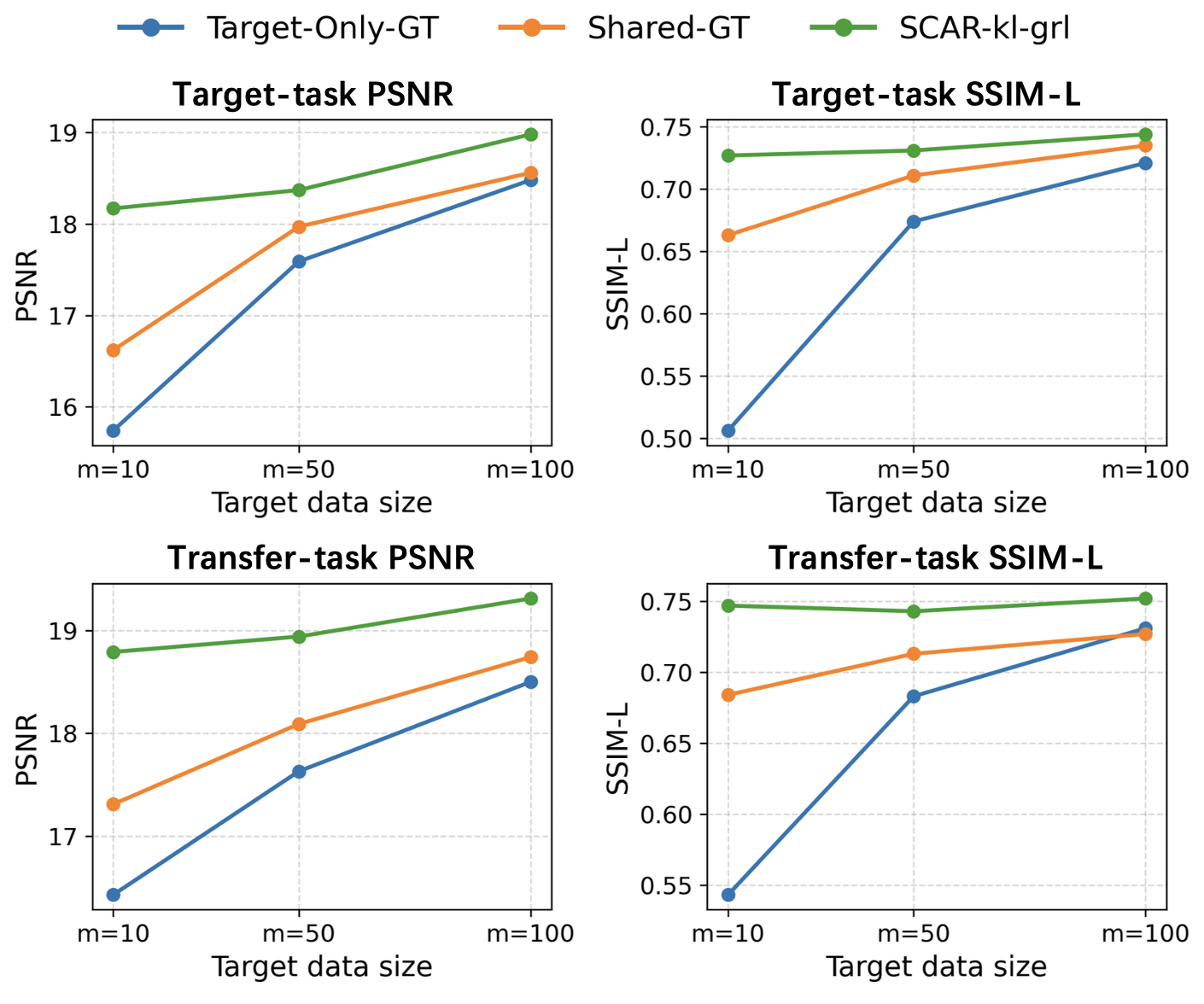}
    \caption{Performance trends across target-embodiment data sizes (\(m=10,50,100\)) on Robotwin target and transfer tasks. SCAR-kl-grl consistently outperforms raw-action baselines across data regimes.}
    \label{fig:data_size_curve}
\end{minipage}
\end{figure}

Table~\ref{tab:transfer_eval_all} shows two consistent trends. First, latent-action conditioning improves over raw-action conditioning in both target-only and shared-data settings, indicating that transition-side latents provide a more effective world-model interface than embodiment-specific commands. Second, KL and GRL regularization further improve the shared latent baseline, with \textbf{SCAR-kl-grl} performing best across all Procgen and Robotwin settings. These gains are moderate in reconstruction metrics but consistent across domains, tasks, and target-data regimes, suggesting that the main benefit is not merely better fitting, but a more transferable conditioning signal. 

Fig.~\ref{fig:cross_task_qualitative} provides a qualitative view of the same effect. Under cross-task transfer, raw-action conditioning is more prone to hallucinating embodiment structure, whereas SCAR better preserves the target robot while following the transferred motion pattern. Fig.~\ref{fig:ablation_KL} further illustrates the role of regularization: without sufficient capacity control, the latent can retain nuisance appearance variation, while KL+GRL produces cleaner transfer. Fig.~\ref{fig:data_size_curve} shows that the advantage of SCAR-kl-grl persists as the number of target-embodiment episodes increases.

\textbf{Embodiment leakage after action transfer.}
We next directly measure whether transferred rollouts preserve the target embodiment. Latent actions are extracted from source embodiments \texttt{aloha}, \texttt{arx}, and \texttt{ur5}, then applied to a \texttt{franka} visual context. We apply an independent single-frame embodiment classifier to predicted future frames only. As shown in Table~\ref{tab:action_transfer_leakage}, \textbf{SCAR-kl-grl} reduces source-embodiment probability and improves target preservation compared with \textbf{Shared-Latent}. This supports the intended role of KL and GRL: reducing embodiment leakage while retaining transition-relevant action structure. Details are provided in Appendix~\ref{app:action_transfer_leakage}.

\begin{table}[t]
\centering
\small
\setlength{\tabcolsep}{6pt}
\caption{Embodiment leakage after cross-embodiment action transfer. Source latent actions are extracted from \texttt{aloha}, \texttt{arx}, and \texttt{ur5} and applied to a \texttt{franka} context, with results averaged over source embodiments.}
\label{tab:action_transfer_leakage}
\begin{tabular}{lcccc}
\toprule
Method 
& Source Prob \(\downarrow\)
& Target Prob \(\uparrow\)
& Target Share \(\uparrow\)
& Target-Source \(\uparrow\)\\
\midrule
Shared-Latent 
& 0.1020 & 0.8105 & 0.8896 & 0.7085  \\
SCAR-kl-grl 
& \textbf{0.0736} & \textbf{0.8333} & \textbf{0.9231} & \textbf{0.7598}  \\
\bottomrule
\end{tabular}
\end{table}
\subsection{Action Interface Comparison on the Target Embodiment}
\label{sec:action_interface_comparison}
The previous experiment uses posterior latents to diagnose representation quality. We now evaluate whether the learned latent interface can be connected back to raw actions. On the Franka target embodiment, we compare action interfaces under the \(m=50\) setting using both the held-out target-task split \texttt{place\_a2b\_left} and the transfer-task split \texttt{place\_a2b\_right}.

\textbf{Compared methods.}
\textbf{Target-Only-GT} and \textbf{Shared-GT} condition the world model directly on raw actions, using target-only data or multi-embodiment data, respectively. \textbf{Pointwise-A2L} predicts latent actions independently from raw actions at each timestep, while \textbf{Sequence-A2L} predicts the full latent action sequence from raw actions and target visual context. \textbf{Sequence-A2L-FT} further fine-tunes the sequence-level A2L controller and FDM while keeping the IDM fixed. All A2L variants use target latent actions extracted from the frozen \textbf{SCAR-kl-grl} IDM.

\begin{table}[t]
\centering
\small
\setlength{\tabcolsep}{4pt}
\caption{Action-interface comparison on Franka under \(m=50\)}
\label{tab:action_interface_comparison} 
\begin{tabular}{lcccccccc}
\toprule
& \multicolumn{4}{c}{Target task} 
& \multicolumn{4}{c}{Transfer task} \\
\cmidrule(lr){2-5} \cmidrule(lr){6-9}
Method 
& SSIM \(\uparrow\) & PSNR \(\uparrow\) & MSE \(\downarrow\) & SSIM-L \(\uparrow\)
& SSIM \(\uparrow\) & PSNR \(\uparrow\) & MSE \(\downarrow\) & SSIM-L \(\uparrow\) \\
\midrule
Target-Only-GT 
& 0.697 & 17.59 & 0.0174 & 0.674
& 0.708 & 17.63 & 0.0173 & 0.683 \\
Shared-GT 
& 0.746 & 17.97 & 0.0159 & 0.711
& 0.758 & 18.09 & 0.0155 & 0.713 \\
Pointwise-A2L 
& 0.676 & 15.96 & 0.0254 & 0.697
& 0.709 & 16.42 & 0.0228 & 0.691 \\
Sequence-A2L 
& 0.741 & 17.80 & 0.0166 & 0.712
& 0.748 & 17.92 & 0.0161 & 0.718 \\
Sequence-A2L-FT 
& \textbf{0.768} & \textbf{18.78} & \textbf{0.0132} & \textbf{0.741}
& \textbf{0.768} & \textbf{18.62} & \textbf{0.0137} & \textbf{0.731} \\
\bottomrule
\end{tabular}
\end{table}

As shown in Table~\ref{tab:action_interface_comparison}, sequence-level A2L substantially outperforms pointwise A2L, indicating that transition-side latent actions are better recovered as temporally contextual sequences rather than independent per-step targets. Without updating the FDM, \textbf{Sequence-A2L} already approaches \textbf{Shared-GT}. With lightweight interface adaptation, \textbf{Sequence-A2L-FT} reaches or exceeds \textbf{Shared-GT} on both target and transfer tasks. These results support a factorized interface: SCAR learns a transferable transition-side latent, while A2L learns the target embodiment's command-to-latent realization.
\subsection{Action Probe}

Finally, we test whether the learned latent action still preserves action-relevant information. We freeze the IDM and train a lightweight sequence regressor to decode raw action sequences from IDM-extracted latent actions using an \(\ell_2\) regression loss. The probe is trained on target-embodiment training trajectories and evaluated on held-out target trajectories.

\vspace{-.5em} 
\begin{table}[h]
\centering
\small
\setlength{\tabcolsep}{4.5pt}
\renewcommand{\arraystretch}{0.95}
\caption{Frozen action probe results on the target embodiment. Lower is better for all metrics.}
\label{tab:action_probe_results}
\begin{tabular}{lcccc}
\toprule
Model & Train MSE \(\downarrow\) & Train L1 \(\downarrow\) & Eval MSE \(\downarrow\) & Eval L1 \(\downarrow\) \\
\midrule
Shared-Latent & \textbf{0.0766} & \textbf{0.193} & 0.149 & 0.289 \\
SCAR-kl & 0.0797 & 0.197 & 0.140 & 0.276 \\
SCAR-grl & 0.0896 & 0.213 & 0.163 & 0.306 \\
SCAR-kl-grl & 0.0789 & 0.198 & \textbf{0.134} & \textbf{0.273} \\
\bottomrule
\end{tabular}
\end{table}

Table~\ref{tab:action_probe_results}  shows that Shared-Latent obtains the lowest training error, while SCAR-kl-grl achieves the best held-out MSE and L1. Thus, regularization does not discard action information; it reduces training-set-specific shortcuts while preserving action-relevant structure.


\section{Conclusion and Discussion}

SCAR moves beyond the default assumption that a world model must be conditioned on the raw command space of a particular embodiment.
Instead, it treats action as a latent factor of visual change: the part of a transition that explains how an agent intervenes in the world, separated from the embodiment-specific command used to realize that intervention. 
By learning this factor through inverse-forward dynamics, and constraining it to be compact and less embodiment-discriminative, SCAR provides a transferable action interface for pretrained video dynamics models. 
The gains across Procgen and Robotwin, together with reduced embodiment leakage, suggest that visual transitions can reveal shared action structure without action supervision for latent discovery.

This work is therefore a step toward world models whose action interface is defined by controllable change rather than robot-specific commands. 
Posterior-latent evaluation isolates the representation-learning problem, while A2L shows that the learned latent space can be connected back to raw commands. 
Scaling to larger real-robot datasets, in-the-wild videos, and closed-loop policy learning remains an important next step. 
More broadly, SCAR suggests that transferable world models require not only better visual prediction, but also better abstractions of action itself.



\bibliography{references}

@article{wan2025wan,
  title={Wan: Open and advanced large-scale video generative models},
  author={Wan, Team and Wang, Ang and Ai, Baole and Wen, Bin and Mao, Chaojie and Xie, Chen-Wei and Chen, Di and Yu, Feiwu and Zhao, Haiming and Yang, Jianxiao and others},
  journal={arXiv preprint arXiv:2503.20314},
  year={2025}
}

@article{liang2025clam,
  title={Clam: Continuous latent action models for robot learning from unlabeled demonstrations},
  author={Liang, Anthony and Czempin, Pavel and Hong, Matthew and Zhou, Yutai and Biyik, Erdem and Tu, Stephen},
  journal={arXiv preprint arXiv:2505.04999},
  year={2025}
}

@book{gallagher2006body,
  title={How the body shapes the mind},
  author={Gallagher, Shaun},
  year={2006},
  publisher={Clarendon press}
}

@book{ataria2021body,
  title={Body schema and body image: New directions},
  author={Ataria, Yochai and Tanaka, Shogo and Gallagher, Shaun},
  year={2021},
  publisher={Oxford University Press}
}

@article{sattin2023overview,
  title={An overview of the body schema and body image: theoretical models, methodological settings and pitfalls for rehabilitation of persons with neurological disorders},
  author={Sattin, Davide and Parma, Chiara and Lunetta, Christian and Zulueta, Aida and Lanzone, Jacopo and Giani, Luca and Vassallo, Marta and Picozzi, Mario and Parati, Eugenio Agostino},
  journal={Brain Sciences},
  volume={13},
  number={10},
  pages={1410},
  year={2023},
  publisher={MDPI}
}

@article{hoffmann2010body,
  title={Body schema in robotics: a review},
  author={Hoffmann, Matej and Marques, Hugo and Arieta, Alejandro and Sumioka, Hidenobu and Lungarella, Max and Pfeifer, Rolf},
  journal={IEEE Transactions on Autonomous Mental Development},
  volume={2},
  number={4},
  pages={304--324},
  year={2010},
  publisher={IEEE}
}

@article{maravita2004tools,
  title={Tools for the body (schema)},
  author={Maravita, Angelo and Iriki, Atsushi},
  journal={Trends in cognitive sciences},
  volume={8},
  number={2},
  pages={79--86},
  year={2004},
  publisher={Elsevier}
}

@article{cardinali2009tool,
  title={Tool-use induces morphological updating of the body schema},
  author={Cardinali, Lucilla and Frassinetti, Francesca and Brozzoli, Claudio and Urquizar, Christian and Roy, Alice C and Farn{\`e}, Alessandro},
  journal={Current biology},
  volume={19},
  number={12},
  pages={R478--R479},
  year={2009},
  publisher={Elsevier}
}

@article{chaminade2005fmri,
  title={An fMRI study of imitation: action representation and body schema},
  author={Chaminade, Thierry and Meltzoff, Andrew N and Decety, Jean},
  journal={Neuropsychologia},
  volume={43},
  number={1},
  pages={115--127},
  year={2005},
  publisher={Elsevier}
}

@article{caspers2010ale,
  title={ALE meta-analysis of action observation and imitation in the human brain},
  author={Caspers, Svenja and Zilles, Karl and Laird, Angela R and Eickhoff, Simon B},
  journal={Neuroimage},
  volume={50},
  number={3},
  pages={1148--1167},
  year={2010},
  publisher={Elsevier}
}

@article{van2011imitation,
  title={Imitation of hand and tool actions is effector-independent},
  author={Van Elk, M and Van Schie, HT and Bekkering, H},
  journal={Experimental brain research},
  volume={214},
  number={4},
  pages={539--547},
  year={2011},
  publisher={Springer}
}

@article{maes2026leworldmodel,
  title={Leworldmodel: Stable end-to-end joint-embedding predictive architecture from pixels},
  author={Maes, Lucas and Lidec, Quentin Le and Scieur, Damien and LeCun, Yann and Balestriero, Randall},
  journal={arXiv preprint arXiv:2603.19312},
  year={2026}
}

@inproceedings{zhoudinowm,
  title={DINO-WM: World Models on Pre-trained Visual Features enable Zero-shot Planning},
  author={Zhou, Gaoyue and Pan, Hengkai and LeCun, Yann and Pinto, Lerrel},
  booktitle={Forty-second International Conference on Machine Learning}
}

@article{chi2025wow,
  title={Wow: Towards a world omniscient world model through embodied interaction},
  author={Chi, Xiaowei and Jia, Peidong and Fan, Chun-Kai and Ju, Xiaozhu and Mi, Weishi and Zhang, Kevin and Qin, Zhiyuan and Tian, Wanxin and Ge, Kuangzhi and Li, Hao and others},
  journal={arXiv preprint arXiv:2509.22642},
  year={2025}
}

@article{goswami2025dexwm,
  title={World Models Can Leverage Human Videos for Dexterous Manipulation},
  author={Goswami, Raktim Gautam and Bar, Amir and Fan, David and Yang, Tsung-Yen and Zhou, Gaoyue and Krishnamurthy, Prashanth and Rabbat, Michael and Khorrami, Farshad and LeCun, Yann},
  journal={arXiv preprint arXiv:2512.13644},
  year={2025}
}

@inproceedings{chandradiwa,
  title={DiWA: Diffusion Policy Adaptation with World Models},
  author={Chandra, Akshay L and Nematollahi, Iman and Huang, Chenguang and Welschehold, Tim and Burgard, Wolfram and Valada, Abhinav},
  booktitle={9th Annual Conference on Robot Learning}
}

@inproceedings{o2024openX,
  title={Open x-embodiment: Robotic learning datasets and rt-x models: Open x-embodiment collaboration 0},
  author={O’Neill, Abby and Rehman, Abdul and Maddukuri, Abhiram and Gupta, Abhishek and Padalkar, Abhishek and Lee, Abraham and Pooley, Acorn and Gupta, Agrim and Mandlekar, Ajay and Jain, Ajinkya and others},
  booktitle={2024 IEEE International Conference on Robotics and Automation (ICRA)},
  pages={6892--6903},
  year={2024},
  organization={IEEE}
}

@inproceedings{lapa,
  title={Latent Action Pretraining from Videos},
  author={Ye, Seonghyeon and Jang, Joel and Jeon, Byeongguk and Joo, Se June and Yang, Jianwei and Peng, Baolin and Mandlekar, Ajay and Tan, Reuben and Chao, Yu-Wei and Lin, Bill Yuchen and others},
  booktitle={The Thirteenth International Conference on Learning Representations}
}

@inproceedings{lapo,
  title={Learning to Act without Actions},
  author={Schmidt, Dominik and Jiang, Minqi},
  booktitle={The Twelfth International Conference on Learning Representations}
}

@article{bauer2025latentdiffusion,
  title={Latent action diffusion for cross-embodiment manipulation},
  author={Bauer, Erik and Nava, Elvis and Katzschmann, Robert K},
  journal={arXiv preprint arXiv:2506.14608},
  year={2025}
}

@article{garrido2026inthewild,
  title={Learning Latent Action World Models In The Wild},
  author={Garrido, Quentin and Nagarajan, Tushar and Terver, Basile and Ballas, Nicolas and LeCun, Yann and Rabbat, Michael},
  journal={arXiv preprint arXiv:2601.05230},
  year={2026}
}

@article{chen2025villa,
  title={Villa-x: enhancing latent action modeling in vision-language-action models},
  author={Chen, Xiaoyu and Wei, Hangxing and Zhang, Pushi and Zhang, Chuheng and Wang, Kaixin and Guo, Yanjiang and Yang, Rushuai and Wang, Yucen and Xiao, Xinquan and Zhao, Li and others},
  journal={arXiv preprint arXiv:2507.23682},
  year={2025}
}

@article{lachapelle2025identifiability,
  title={On the Identifiability of Latent Action Policies},
  author={Lachapelle, S{\'e}bastien},
  journal={arXiv preprint arXiv:2510.01337},
  year={2025}
}

@inproceedings{zhanglatent,
  title={What Do Latent Action Models Actually Learn?},
  author={Zhang, Chuheng and Pearce, Tim and Zhang, Pushi and Wang, Kaixin and Chen, Xiaoyu and Shen, Wei and Zhao, Li and Bian, Jiang},
  booktitle={The Thirty-ninth Annual Conference on Neural Information Processing Systems}
}

@inproceedings{nikulinbackground,
  title={Latent Action Learning Requires Supervision in the Presence of Distractors},
  author={Nikulin, Alexander and Zisman, Ilya and Tarasov, Denis and Nikita, Lyubaykin and Polubarov, Andrei and Kiselev, Igor and Kurenkov, Vladislav},
  booktitle={Forty-second International Conference on Machine Learning}
}

@article{intelligence2025pi05,
  title={{$\pi_0.5$}: A Vision-Language-Action Model with Open-World Generalization},
  author={{Physical Intelligence} and Black, Kevin and Brown, Noah and Darpinian, James and Dhabalia, Karan and Driess, Danny and Esmail, Adnan and Equi, Michael and Finn, Chelsea and Fusai, Niccolo and others},
  journal={arXiv preprint arXiv:2504.16054},
  year={2025}
}

@article{luo2026JALA,
  title={Joint-aligned latent action: Towards scalable vla pretraining in the wild},
  author={Luo, Hao and Wang, Ye and Zhang, Wanpeng and Yuan, Haoqi and Feng, Yicheng and Xu, Haiweng and Zheng, Sipeng and Lu, Zongqing},
  journal={arXiv preprint arXiv:2602.21736},
  year={2026}
}

@article{yang2025egovla,
  title={Egovla: Learning vision-language-action models from egocentric human videos},
  author={Yang, Ruihan and Yu, Qinxi and Wu, Yecheng and Yan, Rui and Li, Borui and Cheng, An-Chieh and Zou, Xueyan and Fang, Yunhao and Cheng, Xuxin and Qiu, Ri-Zhao and others},
  journal={arXiv preprint arXiv:2507.12440},
  year={2025}
}

@article{li2025unidex,
  title={UniDex: Rethinking Search Inverted Indexing with Unified Semantic Modeling},
  author={Li, Zan and Chen, Jiahui and Chai, Yuan and Jiang, Xiaoze and Qi, Xiaohua and Qin, Zhiheng and Zhou, Runbin and Zuo, Shun and Hao, Guangchao and Wang, Kefeng and others},
  journal={arXiv preprint arXiv:2509.24632},
  year={2025}
}

@article{chen2025robotwin,
  title={Robotwin 2.0: A scalable data generator and benchmark with strong domain randomization for robust bimanual robotic manipulation},
  author={Chen, Tianxing and Chen, Zanxin and Chen, Baijun and Cai, Zijian and Liu, Yibin and Li, Zixuan and Liang, Qiwei and Lin, Xianliang and Ge, Yiheng and Gu, Zhenyu and others},
  journal={arXiv preprint arXiv:2506.18088},
  year={2025}
}

@article{suomalainen2022survey,
  title={A survey of robot manipulation in contact},
  author={Suomalainen, Markku and Karayiannidis, Yiannis and Kyrki, Ville},
  journal={Robotics and Autonomous Systems},
  volume={156},
  pages={104224},
  year={2022},
  publisher={Elsevier}
}

@book{lynch2017modern,
  title={Modern robotics},
  author={Lynch, Kevin M and Park, Frank C},
  year={2017},
  publisher={Cambridge University Press}
}

@inproceedings{ganin2015grl,
  title={Unsupervised domain adaptation by backpropagation},
  author={Ganin, Yaroslav and Lempitsky, Victor},
  booktitle={International conference on machine learning},
  pages={1180--1189},
  year={2015},
  organization={PMLR}
}

@article{chen2024diffusion,
  title={Diffusion forcing: Next-token prediction meets full-sequence diffusion},
  author={Chen, Boyuan and Mart{\'\i} Mons{\'o}, Diego and Du, Yilun and Simchowitz, Max and Tedrake, Russ and Sitzmann, Vincent},
  journal={Advances in Neural Information Processing Systems},
  volume={37},
  pages={24081--24125},
  year={2024}
}

@article{jeong2026vila,
  title={Learning to Act Robustly with View-Invariant Latent Actions},
  author={Jeong, Youngjoon and Chun, Junha and Kim, Taesup},
  journal={arXiv preprint arXiv:2601.02994},
  year={2026}
}

@article{cai2025InNOn,
  title={In-N-On: Scaling Egocentric Manipulation with in-the-wild and on-task Data},
  author={Cai, Xiongyi and Qiu, Ri-Zhao and Chen, Geng and Wei, Lai and Liu, Isabella and Huang, Tianshu and Cheng, Xuxin and Wang, Xiaolong},
  journal={arXiv preprint arXiv:2511.15704},
  year={2025}
}

@article{wang2024cross,
  title={Cross-embodiment robot manipulation skill transfer using latent space alignment},
  author={Wang, Tianyu and Bhatt, Dwait and Wang, Xiaolong and Atanasov, Nikolay},
  journal={arXiv preprint arXiv:2406.01968},
  year={2024}
}

@article{bjorck2025gr00t,
  title={Gr00t n1: An open foundation model for generalist humanoid robots},
  author={Bjorck, Johan and Casta{\~n}eda, Fernando and Cherniadev, Nikita and Da, Xingye and Ding, Runyu and Fan, Linxi and Fang, Yu and Fox, Dieter and Hu, Fengyuan and Huang, Spencer and others},
  journal={arXiv preprint arXiv:2503.14734},
  year={2025}
}

@article{bi2025motus,
  title={Motus: A unified latent action world model},
  author={Bi, Hongzhe and Tan, Hengkai and Xie, Shenghao and Wang, Zeyuan and Huang, Shuhe and Liu, Haitian and Zhao, Ruowen and Feng, Yao and Xiang, Chendong and Rong, Yinze and others},
  journal={arXiv preprint arXiv:2512.13030},
  year={2025}
}

@article{yang2026rise,
  title={Rise: Self-improving robot policy with compositional world model},
  author={Yang, Jiazhi and Lin, Kunyang and Li, Jinwei and Zhang, Wencong and Lin, Tianwei and Wu, Longyan and Su, Zhizhong and Zhao, Hao and Zhang, Ya-Qin and Chen, Li and others},
  journal={arXiv preprint arXiv:2602.11075},
  year={2026}
}

@article{hafner2025dreamerv4,
  title={Training agents inside of scalable world models},
  author={Hafner, Danijar and Yan, Wilson and Lillicrap, Timothy},
  journal={arXiv preprint arXiv:2509.24527},
  year={2025}
}

@inproceedings{kareer2025egomimic,
  title={Egomimic: Scaling imitation learning via egocentric video},
  author={Kareer, Simar and Patel, Dhruv and Punamiya, Ryan and Mathur, Pranay and Cheng, Shuo and Wang, Chen and Hoffman, Judy and Xu, Danfei},
  booktitle={2025 IEEE International Conference on Robotics and Automation (ICRA)},
  pages={13226--13233},
  year={2025},
  organization={IEEE}
}

@inproceedings{cobbe2020procgen,
  title={Leveraging procedural generation to benchmark reinforcement learning},
  author={Cobbe, Karl and Hesse, Christopher and Hilton, Jacob and Schulman, John},
  booktitle={Proceedings of the 37th International Conference on Machine Learning},
  pages={2048--2056},
  year={2020}
}

@article{fisher1953dispersion,
  title={Dispersion on a sphere},
  author={Fisher, Ronald Aylmer},
  journal={Proceedings of the royal society of London. Series A. Mathematical and physical sciences},
  volume={217},
  number={1130},
  pages={295--305},
  year={1953},
  publisher={The Royal Society London}
}

@article{reizinger2024cross,
  title={Cross-entropy is all you need to invert the data generating process},
  author={Reizinger, Patrik and Bizeul, Alice and Juhos, Attila and Vogt, Julia E and Balestriero, Randall and Brendel, Wieland and Klindt, David},
  journal={arXiv preprint arXiv:2410.21869},
  year={2024}
}
\bibliographystyle{unsrtnat}

\newpage
\appendix

\section{Proofs}
\label{app:proofs}

\begin{assumption}[Data-generating process]\label{ass:dgp}
\begin{enumerate}[label=\textbf{(\roman*)},leftmargin=2.4em]
    \item The latent action space \(\mathcal{Z}\) is a \(d_z\)-dimensional topological manifold and the action space \(\mathcal{A}\) is finite-dimensional.
    \item The ground-truth latent and the embodiment label are independent: \(\zgt \indep e\) with \(e \in \EE\).
    \item The realization map \(\hh : \mathcal{Z} \times \EE \to \mathcal{A}\) is continuous and, for each fixed \(e\), injective in \(\zgt\), giving the embodiment-specific raw action \(\aa = \hh(\zgt, e)\).
    \item The forward dynamics $\FD : \SS \times \mathcal{A} \to \SS$ is continuous and, for each fixed $\ss_t$, injective in $\aa$, with $\ss_{t+1} = \FD(\ss_t, \aa)$.
    \item The render map $\Render : \SS \to \mathcal{X}$ is continuous and injective, with $\xx_\tau = \Render(\ss_\tau)$.
\end{enumerate}
\end{assumption}
\paragraph{Intuition.}
The gradient-reversal layer with a cross-entropy objective encourages the learned latent space to be invariant to embodiment identity, yielding a unified latent action space across embodiments. This shared space supports the forward dynamics of the world model, where self-supervised conditioning enforces semantic structure, promoting both cross-embodiment consistency and alignment with an underlying unified action space.

\paragraph{Proof Sketch.}
Our result is inspired by~\citet{reizinger2024cross} but differs fundamentally. Prior work shows that minimizing cross-entropy under a vMF distribution recovers the latent space up to a linear transform, relying on alignment with auxiliary variables. In contrast, we maximize cross-entropy with respect to embodiment identity, enforcing invariance rather than alignment. Although their result is not directly applicable, the resulting saddle-point objective still promotes a semantically meaningful latent space. Leveraging this structure and the first-stage representation, we align latent transitions via the inverse dynamics model, matching trajectories in the data-generating process and thereby encouraging latent recovery. Unlike prior work that recovers a variant space, our goal is to recover the invariant structure underlying such variations.

\subsection{Stage 1: ground-truth action up to invertible}

Let $\IDM : \mathcal{X} \times \mathcal{X} \to \mathcal{A}$ be the inverse-dynamics encoder, $\atil := \IDM(\xx_t, \xx_{t+1})$.

\begin{lemma}[Inverse dynamics identifiability]\label{lem:idm}
Under \cref{ass:dgp}, any $\IDM : \mathcal{X} \times \mathcal{X} \to \mathcal{A}$ that, jointly with a sufficiently expressive forward dynamics model $\tilde g_\theta$, globally optimizes the inverse-dynamics reconstruction objective
\[
\mathcal{L}_\mathrm{rec}(\IDM, \tilde g_\theta) \;=\; \E\!\Bigl[\,\bigl\| \ss_{t+1} \,-\, \tilde g_\theta\bigl(\ss_t,\, \IDM(\xx_t, \xx_{t+1})\bigr)\,\bigr\|^2\,\Bigr]
\]
satisfies
\[
\atil \;=\; \rho(\aa) \qquad \text{for some continuous bijection } \rho : \mathcal{A} \to \mathcal{A}.
\]
(We use $\rho$ for the Stage-1 invertible reparameterization to avoid clash with the realization map $\hh : \mathcal{Z} \times \EE \to \mathcal{A}$ in \cref{ass:dgp}.)
\end{lemma}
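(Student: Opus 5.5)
The plan has three steps: show that zero loss is attainable, show that at any global optimum the IDM output determines the raw action, and then build $\rho$ from that dependence.

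\textbf{Step 1: zero loss is attainable.} Under \cref{ass:dgp}(iv)--(v), the transition is deterministic, so the population minimum of $\mathcal{L}_\mathrm{rec}$ is zero. To exhibit a minimizer, use injectivity of $\Render$ to recover $\ss_t=\Render^{-1}(\xx_t)$ and $\ss_{t+1}=\Render^{-1}(\xx_{t+1})$. Then use injectivity of $\FD(\ss_t,\cdot)$ to recover $\aa$ as the unique preimage of $\ss_{t+1}$. Setting $\IDM^\star(\xx_t,\xx_{t+1})=\aa$ and $\tilde g_\theta=\FD$, which the sufficiently expressive forward model can represent, gives zero loss. Hence any global optimizer $(\IDM,\tilde g_\theta)$ also attains zero loss. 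This means $\ss_{t+1}=\tilde g_\theta(\ss_t,\atil)$ almost surely on the data support, and I will assume continuity so that this holds on the whole support.

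\textbf{Step 2: $\atil$ determines $\aa$.} Fix $\ss_t$. Suppose two actions $\aa,\aa'$ produce the same $\atil$. Then
\[
\FD(\ss_t,\aa)=\tilde g_\theta(\ss_t,\atil)=\FD(\ss_t,\aa'),
\]
so $\aa=\aa'$ by injectivity of $\FD(\ss_t,\cdot)$. Thus $\aa=\kappa(\ss_t,\atil)$ for some map $\kappa$.

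\textbf{Step 3: $\aa$ determines $\atil$.} Since $\xx_t$ and $\aa$ determine $\xx_{t+1}=\Render(\FD(\ss_t,\aa))$, the IDM output is a function $\atil=\IDM(\xx_t,\Render(\FD(\Render^{-1}\xx_t,\aa)))$. Together with Step 2, for each fixed context the map $\aa\mapsto\atil$ is a bijection onto its image. It is continuous as a composition of continuous maps, provided $\IDM$ is taken continuous.

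\textbf{Main obstacle: removing the dependence on $\ss_t$.} The argument so far yields only a context-dependent $\rho_{\ss_t}$, while the lemma claims a single $\rho:\mathcal{A}\to\mathcal{A}$. The identity $\ss_{t+1}=\tilde g_\theta(\ss_t,\atil)$ allows $\atil$ to be any state-indexed relabeling of $\aa$, because $\tilde g_\theta$ can compensate for the relabeling. To close this gap I would first try either of two routes:
\begin{enumerate}[label=(\alph*)]
    \item impose the (implicit) additional condition that the IDM's action coordinate is shared across states, i.e.\ $\IDM$ factors through $\aa$ alone, as the paper's intended reading suggests;
    \item restrict $\tilde g_\theta$ to a class in which state and action enter through a fixed action embedding, such as the AdaLN conditioning used in practice, so that the relabeling cannot vary with $\ss_t$.
\end{enumerate}
Under either route, $\rho:=\rho_{\ss_t}$ is independent of $\ss_t$. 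It is a continuous injection on the finite-dimensional space $\mathcal{A}$, and invariance of domain then gives an open embedding that is a bijection onto its image, which I identify with $\mathcal{A}$ on the data support. Getting surjectivity onto all of $\mathcal{A}$ rather than onto the image also needs this support caveat. I expect the state-independence step to be where the proof must add an assumption rather than derive it.
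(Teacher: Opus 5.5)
Your Steps 1--3 follow the paper's proof step for step. Both arguments invert $\Render$, recover $\aa$ by injectivity of $\FD(\ss_t,\cdot)$, exhibit the zero-loss pair $(\Theta,\FD)$, and read off $\tilde g^\star_\theta(\ss_t,\atil)=\FD(\ss_t,\aa)$ at any optimum. Your Step~2 is sounder than the paper's version. The paper claims $\tilde g^\star_\theta$ is injective in its second argument ``by determinism of $\IDM^\star$'', which does not follow. Only injectivity on realized values holds, and that is exactly what you derive.

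The step you leave open is a genuine gap, but the gap is in the lemma, not in your reasoning. The paper closes it with a ``subtlety'' paragraph: it asserts that the $\ss_t$-dependence of $\rho$ must cancel because $\IDM^\star$ is a fixed function of its inputs. That inference is invalid, since $\IDM^\star$ receives $\xx_t=\Render(\ss_t)$ and may depend on it freely. The objective has a state-dependent gauge symmetry. For any family of homeomorphisms $T_{\ss}$ of $\mathcal{A}$, set $\IDM'(\xx_t,\xx_{t+1}):=T_{\Render^{-1}(\xx_t)}\bigl(\IDM(\xx_t,\xx_{t+1})\bigr)$ and $\tilde g'_\theta(\ss,\atil):=\tilde g_\theta\bigl(\ss,T_{\ss}^{-1}(\atil)\bigr)$; this leaves $\mathcal{L}_\mathrm{rec}$ unchanged. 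Concretely, take $\SS=\mathcal{A}=\R$, $\Render=\mathrm{id}$, $\FD(\ss,\aa)=\ss+\aa$, with $\ss_t$ and $\aa$ independent standard normals. This setup satisfies every item of \cref{ass:dgp}. The pair $\IDM(\xx_t,\xx_{t+1})=\xx_{t+1}$, $\tilde g_\theta(\ss,\atil)=\atil$ attains zero loss, yet $\atil=\ss_t+\aa$ is not a function of $\aa$. So the statement is false as written, and adding a hypothesis, as you propose, is the only way out.

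Your route (b) does not supply that hypothesis:
\begin{itemize}
\item An expressive AdaLN-conditioned backbone is closed under the gauge above; the example's $\tilde g_\theta$ even ignores $\ss$ altogether.
\item Even action-affine models $\tilde G(\ss)+\tilde B\atil$, with $\FD$ itself affine in $\aa$ as in the example, admit offsets $\atil=\aa+c(\ss_t)$, which $\tilde G$ absorbs.
\end{itemize}
Two options do work. Route (a) works, but it essentially posits the conclusion. Alternatively, pin the forward model to the form $\FD(\ss,\psi(\atil))$ with $\psi$ injective; then $\atil=\psi^{-1}(\aa)$ follows immediately from injectivity of $\FD(\ss_t,\cdot)$.

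Your other caveats are also correct, and the paper lacks them. It gets ``continuous and bijective $\mathcal{A}\to\mathcal{A}$'' from ``a composition of two continuous injections''. That yields neither surjectivity nor, in general, continuity of $\tilde g^\star_\theta(\ss_t,\cdot)^{-1}$. For surjectivity, take $\mathcal{A}=\R$, $\IDM=\tanh\circ\Theta$ and $\tilde g_\theta(\ss,\atil)=\FD(\ss,\operatorname{artanh}\atil)$ for $|\atil|<1$: this pair is optimal, but $\rho=\tanh$ is not onto $\R$. What is actually provable is what you state: for each fixed $\ss_t$, $\aa\mapsto\atil$ is injective on the support, and continuous when $\IDM$ is.
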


\begin{proof}
The proof reduces the IDM identifiability claim to the chain of invertibilities already in \cref{ass:dgp}, in three steps.

\smallskip
\noindent\emph{Step 1 (observation determines state).}
By \cref{ass:dgp}(v), $\Render : \SS \to \mathcal{X}$ is continuous and injective; both spaces are finite-dimensional manifolds of the same intrinsic dimension. By Brouwer's invariance of domain, $\Render$ is a homeomorphism from $\SS$ onto its image $\Render(\SS) \subseteq \mathcal{X}$, with continuous inverse $\Render^{-1} : \Render(\SS) \to \SS$. Hence the observation pair $(\xx_t, \xx_{t+1}) \in \Render(\SS) \times \Render(\SS)$ uniquely determines the latent state pair:
\[
\ss_\tau \;=\; \Render^{-1}(\xx_\tau), \qquad \tau \in \{t, t+1\}.
\]

\smallskip
\noindent\emph{Step 2 (state transition determines the action).}
By \cref{ass:dgp}(iv), the forward dynamics $\FD(\ss_t, \cdot) : \mathcal{A} \to \SS$ is continuous and injective in $\aa$ for each fixed $\ss_t$. Hence for any state pair $(\ss_t, \ss_{t+1})$ generated by an action $\aa$ (i.e., $\ss_{t+1} = \FD(\ss_t, \aa)$), the action is uniquely identified as
\[
\aa \;=\; \FD(\ss_t, \cdot)^{-1}(\ss_{t+1}),
\]
where the inverse is taken on the image of $\FD(\ss_t, \cdot)$.

\smallskip
\noindent\emph{Step 3 (joint IDM--FDM optimum gives $\atil = \rho(\aa)$).}
Combining Steps~1--2, the ground-truth raw action is a continuous deterministic function of the observation pair:
\[
\aa \;=\; \Theta(\xx_t, \xx_{t+1}) \,:=\, \FD(\Render^{-1}(\xx_t), \cdot)^{-1}\bigl(\Render^{-1}(\xx_{t+1})\bigr).
\]
At any global minimizer of $\mathcal{L}_\mathrm{rec}$ with $\mathcal{L}_\mathrm{rec} = 0$ (achievable: take $\tilde g_\theta^\star = \FD$ and $\IDM^\star = \Theta$, giving exact reconstruction), the IDM output $\atil := \IDM^\star(\xx_t, \xx_{t+1})$ satisfies, for every realization $(\ss_t, \ss_{t+1}, \aa)$ in the support,
\begin{equation}
\label{eq:lem-rec}
\tilde g_\theta^\star(\ss_t,\, \atil) \;=\; \ss_{t+1} \;=\; \FD(\ss_t, \aa).
\end{equation}
The forward model $\tilde g_\theta^\star$ at the optimum is also injective in its second argument (otherwise \cref{eq:lem-rec} would have multiple solutions $\atil$, contradicting determinism of $\IDM^\star$). Define
\[
\rho \,:\; \mathcal{A} \to \mathcal{A},
\qquad
\rho(\aa) \;:=\; \tilde g_\theta^\star(\ss_t, \cdot)^{-1}\bigl(\FD(\ss_t, \aa)\bigr).
\]
$\rho$ is a composition of two continuous injections (\cref{ass:dgp}(iv) and the just-noted injectivity of $\tilde g_\theta^\star$), hence continuous and bijective $\mathcal{A} \to \mathcal{A}$. By \cref{eq:lem-rec}, $\atil = \rho(\aa)$.

A subtlety: $\rho$ as written above may a priori depend on $\ss_t$. But $\atil = \IDM^\star(\xx_t, \xx_{t+1})$ depends on $(\ss_t, \aa)$ only through the deterministic chain $\aa \to \ss_{t+1} \to (\xx_t, \xx_{t+1})$, and $\IDM^\star$ is a fixed function of its inputs. For $\rho$ to be consistent across all $\ss_t$ in the support, the $\ss_t$-dependence must cancel: the joint minimizer $(\IDM^\star, \tilde g_\theta^\star)$ is identified up to a single global reparameterization of the action coordinate, parameterized by $\rho$ alone. Hence $\rho : \mathcal{A} \to \mathcal{A}$ is a continuous bijection independent of $\ss_t$, and
\[
\atil \;=\; \rho(\aa),
\]
as claimed.
\end{proof}

\subsection{Stage 2: embodiment-invariant identifiability up to invertible bijection}

Let $\ff_2 : \R^{d_a} \to \R^{d_z}$ be the second-stage encoder, $\zest := \ff_2(\atil)$, and let $D_\omega : \R^{d_z} \to \Delta^{|\EE|-1}$ be a softmax classifier with rows $\{\ww_e\}_{e \in \EE}$ producing
\[
D_\omega(e \mid \zest) \;=\; \frac{\exp\!\langle \ww_e, \zest\rangle}{\sum_{e' \in \EE} \exp\!\langle \ww_{e'}, \zest\rangle}.
\]
The encoder--classifier pair is trained against the population cross-entropy
\[
\mathcal{L}_\text{CE}(\omega, \ff_2) \;:=\; \E_{(\atil, e)}\!\bigl[-\log D_\omega(e \mid \ff_2(\atil))\bigr],
\]
where the gradient-reversal layer makes the encoder $\ff_2$ \emph{maximize} $\mathcal{L}_\text{CE}$ while the classifier $\omega$ \emph{minimizes} it. We study the saddle of this minimax game.

\paragraph{Intuition.}
The proof is a discriminator--encoder duality argument tailored to the von Mises--Fisher (vMF) structure~\cite{fisher1953dispersion} of $\atil$. The cross-entropy + vMF + ``affine generator'' machinery used here originated in the supervised identifiability proofs of \citet{reizinger2024cross}; the new ingredient is the outer sign reversal on the encoder, which lands the encoder in the orthogonal complement $V^{\perp}$ of the cluster-difference subspace rather than aligning with it. We derive everything below from first principles; no result is invoked by reference.

\begin{assumption}[vMF on $\atil$ with partial-rank cluster span]\label{ass:vmf}
$\atil$ lies on $\mathbb{S}^{d_a - 1}$, with embodiment-conditional distribution
\begin{equation}
\label{eq:vmf-action}
p(\atil \mid e) \;\propto\; \exp\!\bigl(\kappa\, \langle \vv_e,\, \atil \rangle\bigr), \qquad e \in \EE,
\end{equation}
where $\{\vv_e\}_{e \in \EE} \subseteq \mathbb{S}^{d_a-1}$ are unit-norm cluster centers. Define the embodiment-cluster subspace
\[
V \;:=\; \mathrm{span}\bigl\{\vv_e - \vv_{e'} : e, e' \in \EE\bigr\} \;\subseteq\; \R^{d_a},
\qquad \dim V \;=\; d_a - d_z,
\]
so $\dim V^{\perp} = d_z$. Equivalently, $\{\vv_e\}$ is an affine generator system of an affine subspace of $\R^{d_a}$ of dimension $d_a - d_z$.
\end{assumption}

\begin{assumption}[Linear encoder + non-collapse]\label{ass:enc}
The Stage-2 encoder is restricted to linear maps $\ff_2(\atil) = M\atil$ with $M \in \R^{d_z \times d_a}$. The encoder is non-collapsed in the sense that $\mathrm{rank}(M) = d_z$ and, for every $e \in \EE$, the composition $M \circ \rho \circ \hh(\cdot, e) : \mathcal{Z} \to \R^{d_z}$ is injective.
\end{assumption}

\begin{theorem}[Embodiment invariance + unified latent action space recovery and cross-embodiment transfer]\label{thm:main}
Assume \cref{ass:dgp,lem:idm,ass:vmf,ass:enc}, with $d_z = d_a - \dim V$. Let $\ff_2$ globally optimize the gradient-reversal cross-entropy objective on $e$ (acting on $\atil$). Then:
\begin{enumerate}[leftmargin=*, label=(\alph*)]
    \item \textbf{\emph{Embodiment invariance.}} $\zest \indep e$, equivalently $p(\zest \mid e) = p(\zest)$ for every $e \in \EE$.
    \item \textbf{\emph{Unified latent action space recovery and cross-embodiment transfer.}} For every $e \in \EE$, the composition
    \[
    \Phi_e \,:\, \mathcal{Z} \,\to\, \R^{d_z},
    \qquad
    \Phi_e(\zgt) \,:=\, \ff_2\bigl(\rho(\hh(\zgt, e))\bigr),
    \]
    is a continuous bijection. The family $\{\Phi_e\}_{e \in \EE}$ shares a common pushforward law:
    \[
    (\Phi_e)_{\#}\, p(\zgt) \;=\; p(\zest) \quad \text{for every } e \in \EE.
    \]
    Because $\Phi_e$ is itself indexed by $e$, recovering the unified latent $\zgt$ from the inferred $\zest$ requires the embodiment ID $e$, and cross-embodiment transfer between $e_1$ and $e_2$ is realized by the alignment map $\Phi_{e_2} \circ \Phi_{e_1}^{-1}$.
\end{enumerate}
\end{theorem}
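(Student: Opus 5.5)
We argue the two parts separately. Part (a) comes from a saddle-point analysis of the linear softmax classifier against the linear encoder $M$. Part (b) comes from composing the Stage-1 reparameterization of \cref{lem:idm} with the injectivity assumptions.

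For (a), we first solve the inner problem. For fixed $M$, the classifier $D_\omega$ is minimized over $\omega$. As noted in the main text, the optimal value is $H(e\mid \zest)=H(e)-I(e;\zest)$, at least when the softmax family contains the Bayes posterior. The outer maximization by the encoder is therefore equivalent to minimizing $I(e;\zest)$, whose global optimum $I=0$ corresponds to $\zest\indep e$. We then show that this optimum is attained within \cref{ass:enc}. Under \cref{ass:vmf}, $p(\atil\mid e)\propto\exp(\kappa\langle\vv_e,\atil\rangle)$, so the log-likelihood ratio between $e$ and $e'$ is $\kappa\langle \vv_e-\vv_{e'},\atil\rangle$ plus a constant. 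It depends on $\atil$ only through its projection onto $V$. If $\ker M = V$, equivalently the row space of $M$ is $V^\perp$ (possible since $\dim V^\perp=d_z$ and $\mathrm{rank}\,M=d_z$), then $\zest=M\atil$ discards exactly the $V$-component. The posterior $p(e\mid\zest)$ then equals the prior, so $I(e;\zest)=0$.

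To make this last claim rigorous, we decompose $\atil=P_V\atil+P_{V^\perp}\atil$ and write the density of $P_{V^\perp}\atil$ given $e$. Because all $\vv_e$ share a common component $\bar{\vv}\in V^\perp$ (the affine-generator condition), the tilt term $\langle\vv_e,\atil\rangle$ splits into $\langle\bar{\vv},P_{V^\perp}\atil\rangle+\langle \vv_e-\bar{\vv},P_V\atil\rangle$. The only $e$-dependent part involves $P_V\atil$. The marginal of $P_{V^\perp}\atil$ is obtained by integrating the second factor over the sphere fibre of radius $\sqrt{1-\|P_{V^\perp}\atil\|^2}$. That integral depends on $e$ only through $\|\vv_e-\bar{\vv}\|$, which is constant across $e$ since all the $\vv_e$ are unit vectors with the same projection $\bar{\vv}$ onto $V^\perp$. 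We then need the converse: any global optimizer must have $I=0$, since the value $I=0$ is attained and is the best possible. This yields (a). \textbf{This fibre-integration step is the main obstacle.} If it fails in general, we would fall back on reading (a) as the statement that the optimal encoder kills the $V$-directions.

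For (b), $\Phi_e=M\circ\rho\circ\hh(\cdot,e)$ is continuous because each factor is continuous: $\hh$ by \cref{ass:dgp}(iii), $\rho$ by \cref{lem:idm}, and $M$ because it is linear. Injectivity is exactly the non-collapse clause of \cref{ass:enc}. We read bijectivity as onto its image, and invoke invariance of domain (with $\dim\mathcal{Z}=d_z$) to get a homeomorphism onto an open image. For the pushforward, we use $\zgt\indep e$ (\cref{ass:dgp}(ii)). Conditionally on $e$, $\zest=\Phi_e(\zgt)$ with $\zgt\sim p(\zgt)$, so $p(\zest\mid e)=(\Phi_e)_\#p(\zgt)$. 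Combining this with (a) gives $(\Phi_e)_\#p(\zgt)=p(\zest)$ for every $e$. Finally, $\Phi_{e_2}\circ\Phi_{e_1}^{-1}$ is well defined on the common support by injectivity. It transports $\zest$ inferred under $e_1$ to the code that the same $\zgt$ would produce under $e_2$, which is the stated transfer map.
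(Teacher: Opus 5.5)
Your proposal is correct and follows the paper's skeleton. Part (a) goes through the same cross-entropy decomposition, $\min_\omega \mathcal{L}_\text{CE} = H(e) - I(e;\zest)$; you rely on the same richness assumption on the softmax family, which you flag and the paper simply asserts. You then show $I = 0$ is attained by a full-rank $M$ with $\ker M = V$. Part (b) is exactly the paper's Steps 8--10: continuity plus the non-collapse clause of \cref{ass:enc}, invariance of domain, and the pushforward identity from $\zgt \indep e$ combined with (a).

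The real difference is how you certify that $\ker M = V$ makes $\zest$ independent of $e$. The paper computes the moment generating function of $M\atil$ given $e$ in closed form, $C(\kappa)\,\Psi\bigl(\sqrt{\|M^\top \tilde u\|^2 + 2\kappa\langle \tilde u, M\vv_e\rangle + \kappa^2}\bigr)$. So the conditional law depends on $e$ only through $M\vv_e$, which is constant once $V \subseteq \ker M$. You instead disintegrate the sphere along the fibres of $P_{V^\perp}$.

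The step you flag as the main obstacle does go through, so your fallback reading is unnecessary. The law of $y = P_{V^\perp}\atil$ picks up only the $e$-independent factor $(1-\|y\|^2)^{(\dim V - 2)/2}$. The fibre integral of $\exp(\kappa\langle P_V \vv_e, w\rangle)$ over a round sphere in $V$ depends only on $\|P_V\vv_e\| = \sqrt{1 - \|\bar{\vv}\|^2}$, which is the same for all $e$. For $\dim V = 1$, use reflection symmetry, which gives a $\cosh$.

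What the MGF route buys is the converse. Strict monotonicity of $\Psi$ turns invariance into $M\vv_e = M\vv_{e'}$, and with $\mathrm{rank}\, M = d_z$ this forces $\mathrm{row}(M) = V^{\perp}$ for every optimal encoder (the paper's Steps 6--7). Your fibre argument gives only sufficiency, so you do not get that characterization. However, conclusions (a) and (b) as stated never use it, so your proof of the statement is complete.
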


\begin{proof}
We prove (a) and (b) in sequence. The proof uses six lemmas, each established from first principles.

\medskip
\noindent\textbf{Part (a): embodiment invariance, $\zest \indep e$.}

\smallskip
\noindent\emph{Step 1 (cross-entropy decomposition).}
Write $\zest = \ff_2(\atil)$. For any encoder $\ff_2$ and classifier $D_\omega$, expand the cross-entropy by inserting $\log p(e \mid \zest)$:
\begin{align}
\mathcal{L}_\text{CE}(\omega, \ff_2)
&\;=\; \E_{(\atil, e)}\!\bigl[-\log D_\omega(e \mid \zest)\bigr] \notag \\
&\;=\; \E_{(\atil, e)}\!\bigl[-\log p(e \mid \zest)\bigr]
   + \E_{(\atil, e)}\!\biggl[\log \frac{p(e \mid \zest)}{D_\omega(e \mid \zest)}\biggr] \notag \\
&\;=\; H(e \mid \zest)
   + \E_{\zest}\!\bigl[\KL\bigl(p(\cdot \mid \zest)\,\|\, D_\omega(\cdot \mid \zest)\bigr)\bigr]. \label{eq:ce-decomp}
\end{align}
The first term, the conditional entropy of $e$ given $\zest$, depends only on the joint $p(e, \zest)$, hence only on $\ff_2$ (not $\omega$). The second term is non-negative since each $\KL$ is non-negative.

\smallskip
\noindent\emph{Step 2 (inner classifier optimum).}
By \cref{eq:ce-decomp},
\[
\mathcal{L}_\text{CE}(\omega, \ff_2) \;\ge\; H(e \mid \zest),
\]
with equality if and only if $D_\omega(\cdot \mid \zest) = p(\cdot \mid \zest)$ for $p_{\zest}$-almost every $\zest$. The inner softmax classifier family is rich enough to realize any conditional distribution on $\EE$ given $\zest \in \R^{d_z}$ (sufficient set of weights $\{\ww_e\}$ exists). Hence
\begin{equation}
\label{eq:bayes-opt}
\min_\omega \mathcal{L}_\text{CE}(\omega, \ff_2) \;=\; H(e \mid \zest),
\qquad \text{attained at } D^\star_\omega(\cdot \mid \zest) = p(\cdot \mid \zest).
\end{equation}

\smallskip
\noindent\emph{Step 3 (information-theoretic identity).}
Using $H(e \mid \zest) = H(e) - I(e; \zest)$ and that $H(e)$ is constant in $\ff_2$ (the marginal of $e$ is fixed by the data), \cref{eq:bayes-opt} becomes
\begin{equation}
\label{eq:inner-opt-mi}
\min_\omega \mathcal{L}_\text{CE}(\omega, \ff_2) \;=\; H(e) - I(e; \zest).
\end{equation}

\smallskip
\noindent\emph{Step 4 (outer maximization under gradient reversal).}
The gradient-reversal layer reverses the sign of $\partial \mathcal{L}_\text{CE} / \partial \phi$ in the encoder update, while leaving $\partial \mathcal{L}_\text{CE} / \partial \omega$ unchanged. The classifier therefore still minimizes $\mathcal{L}_\text{CE}$ in $\omega$, while the encoder maximizes it in $\ff_2$. The resulting saddle problem is
\begin{equation}
\label{eq:saddle}
\sup_{\ff_2}\, \min_\omega \mathcal{L}_\text{CE}(\omega, \ff_2) \;=\; \sup_{\ff_2}\bigl[\,H(e) - I(e; \zest)\,\bigr] \;=\; H(e) - \inf_{\ff_2} I(e; \zest).
\end{equation}
Mutual information is non-negative, $I(e; \zest) \ge 0$, with equality if and only if $\zest \indep e$. To establish that the infimum $\inf_{\ff_2} I = 0$ is attained, exhibit any feasible encoder achieving it: take $M_0 \in \R^{d_z \times d_a}$ to be a matrix whose rows form an orthonormal basis of $V^{\perp}$ (which exists with $\dim V^{\perp} = d_z$ by \cref{ass:vmf}). For this $M_0$, the rotational-symmetry computation in Step~5 below shows $\zest = M_0 \atil$ has $I(e; \zest) = 0$. Hence
\[
\sup_{\ff_2}\, \min_\omega \mathcal{L}_\text{CE}(\omega, \ff_2) \;=\; H(e).
\]
At the saddle, $I(e; \zest^\star) = 0$, equivalently
\[
\zest^\star \;\indep\; e.
\]
This proves (a).

\medskip
\noindent\textbf{Part (b): unified latent action space recovery and cross-embodiment transfer.}

\smallskip
\noindent\emph{Step 5 (linear pushforward of vMF: distribution depends on $e$ only through $M\vv_e$).}
Fix the linear encoder $\ff_2(\atil) = M\atil$ with $M \in \R^{d_z \times d_a}$. We compute the moment generating function (MGF) of $\zest = M\atil$ given $e$.

For any test direction $\tilde u \in \R^{d_z}$, set $u := M^\top \tilde u \in \R^{d_a}$. Then
\begin{equation}
\label{eq:mgf-pull}
\E\!\bigl[\exp\!\bigl(\langle \tilde u,\, M\atil\rangle\bigr) \,\big|\, e\bigr]
\;=\; \E\!\bigl[\exp\!\bigl(\langle u,\, \atil\rangle\bigr) \,\big|\, e\bigr].
\end{equation}
Plug in the vMF density from \cref{eq:vmf-action} with normalizing constant $C(\kappa) > 0$:
\begin{align}
\E\!\bigl[\exp(\langle u, \atil\rangle) \,\big|\, e\bigr]
&\;=\; C(\kappa) \int_{\mathbb{S}^{d_a-1}} \exp(\langle u, x\rangle) \cdot \exp(\kappa \langle \vv_e, x\rangle)\, d\sigma(x) \notag \\
&\;=\; C(\kappa) \int_{\mathbb{S}^{d_a-1}} \exp(\langle u + \kappa \vv_e,\, x\rangle)\, d\sigma(x), \label{eq:vmf-mgf}
\end{align}
where $\sigma$ is the surface measure on $\mathbb{S}^{d_a-1}$.

The integral on the right depends on $u + \kappa \vv_e$ only through its norm, by rotational invariance of $\sigma$:
\[
\int_{\mathbb{S}^{d_a-1}} \exp(\langle w, x\rangle)\, d\sigma(x) \;=\; \Psi(\|w\|),
\qquad
\Psi(r) \;:=\; (2\pi)^{d_a/2}\, r^{1 - d_a/2}\, I_{d_a/2 - 1}(r),
\]
with $I_\nu$ the modified Bessel function of the first kind. (For $w \neq 0$, choose any rotation $R \in SO(d_a)$ taking $w$ to $\|w\|\mathbf{e}_1$; the integral is invariant under $R$ because $d\sigma$ is.) Compute $\|u + \kappa \vv_e\|^2$ using $\|\vv_e\| = 1$:
\[
\|u + \kappa \vv_e\|^2 \;=\; \|u\|^2 + 2\kappa \langle u, \vv_e\rangle + \kappa^2.
\]
The only $e$-dependence is via the inner product $\langle u, \vv_e\rangle = \langle M^\top \tilde u, \vv_e\rangle = \langle \tilde u, M\vv_e\rangle$. Substituting back into \cref{eq:mgf-pull}:
\begin{equation}
\label{eq:mgf-mu}
\E\!\bigl[\exp(\langle \tilde u,\, M\atil\rangle) \,\big|\, e\bigr]
\;=\; C(\kappa)\, \Psi\!\Bigl(\sqrt{\|M^\top \tilde u\|^2 + 2\kappa \langle \tilde u, M\vv_e\rangle + \kappa^2}\Bigr).
\end{equation}
The right-hand side is a function of $\langle \tilde u, M\vv_e\rangle$ only (the other terms do not depend on $e$).

\smallskip
\noindent\emph{Step 6 (invariance forces $M \vv_e = M \vv_{e'}$).}
By Part (a), $\zest \indep e$, so the MGF in \cref{eq:mgf-mu} is independent of $e$ for every $\tilde u \in \R^{d_z}$. Since the right-hand side is a function of $\langle \tilde u, M\vv_e\rangle$ only, and $\Psi$ is strictly monotone on $(0, \infty)$ (the modified Bessel $I_{d_a/2-1}(r)$ is strictly increasing in $r > 0$), $e$-independence of the MGF for every $\tilde u$ implies
\[
\langle \tilde u, M\vv_e\rangle \;=\; \langle \tilde u, M\vv_{e'}\rangle \qquad \forall \tilde u \in \R^{d_z},\ \forall e, e' \in \EE.
\]
Equivalently,
\begin{equation}
\label{eq:our-key}
\boxed{\;M\vv_e \;=\; M\vv_{e'} \quad \forall e, e' \in \EE\;}
\quad\Longleftrightarrow\quad
M(\vv_e - \vv_{e'}) = 0 \quad \forall e, e'
\quad\Longleftrightarrow\quad
\mathrm{row}(M) \,\subseteq\, V^{\perp},
\end{equation}
where the last equivalence uses the definition $V = \mathrm{span}\{\vv_e - \vv_{e'}\}$ from \cref{ass:vmf}, so $V^{\perp} = \{v \in \R^{d_a} : \langle v, \vv_e - \vv_{e'}\rangle = 0\ \forall e, e'\}$.

\smallskip
\noindent\emph{Step 7 (rank + dimension match $\Rightarrow$ $\mathrm{row}(M) = V^{\perp}$).}
By \cref{ass:enc}, $\mathrm{rank}(M) = d_z$. By \cref{ass:vmf}, $\dim V^{\perp} = d_a - \dim V = d_z$. Combined with \cref{eq:our-key}, the row space of $M$ is a $d_z$-dimensional subspace of the $d_z$-dimensional space $V^{\perp}$, hence
\[
\mathrm{row}(M) \;=\; V^{\perp}.
\]
Equivalently, $\ker M = V$, and $M$ restricted to $V^{\perp}$ is a linear isomorphism $V^{\perp} \to \R^{d_z}$.

\smallskip
\noindent\emph{Step 8 (per-embodiment chain is a continuous injection).}
Fix any $e \in \EE$. Define
\[
\Phi_e \,:\; \mathcal{Z} \;\longrightarrow\; \R^{d_z},
\qquad
\Phi_e(\zgt) \,:=\, M\bigl(\rho(\hh(\zgt, e))\bigr) \;=\; \ff_2(\atil(\zgt, e)).
\]
$\Phi_e$ is a composition of three continuous maps:
\begin{enumerate}[leftmargin=*, label=(\roman*)]
    \item $\hh(\cdot, e) : \mathcal{Z} \to \mathcal{A}$ is continuous and injective in $\zgt$ (by \cref{ass:dgp}, item~(iii)).
    \item $\rho : \mathcal{A} \to \mathcal{A}$ is invertible, hence in particular continuous and injective (by \cref{lem:idm}).
    \item $M : \R^{d_a} \to \R^{d_z}$ is linear, hence continuous.
\end{enumerate}
Continuity of the composition is immediate. Injectivity of the composition $\Phi_e = M \circ \rho \circ \hh(\cdot, e)$ is exactly the second clause of \cref{ass:enc}.

\smallskip
\noindent\emph{Step 9 (Brouwer's invariance of domain $\Rightarrow$ $\Phi_e$ is a continuous bijection).}
The latent space $\mathcal{Z}$ is a $d_z$-dimensional topological manifold (by \cref{ass:dgp}, item~(i)), and $\Phi_e$ maps into $\R^{d_z}$, also of dimension $d_z$. By Brouwer's invariance-of-domain theorem, any continuous injection from an open subset of $\R^{d_z}$ into $\R^{d_z}$ is an open map and a homeomorphism onto its image. Applied to $\Phi_e$:
\[
\Phi_e \,:\, \mathcal{Z} \,\to\, \R^{d_z}
\]
is a homeomorphism, in particular a continuous bijection.

\smallskip
\noindent\emph{Step 10 (common pushforward law).}
By \cref{ass:dgp} item~(ii), $\zgt \indep e$, so the conditional distribution $p(\zgt \mid e) = p(\zgt)$ for every $e \in \EE$. Hence the conditional distribution of $\zest$ given $e$ is the pushforward of $p(\zgt)$ under $\Phi_e$:
\[
p(\zest \mid e) \;=\; (\Phi_e)_\# \, p(\zgt).
\]
By Part (a), $\zest \indep e$, so $p(\zest \mid e) = p(\zest)$ for every $e$. Combining,
\[
(\Phi_e)_\# \, p(\zgt) \;=\; p(\zest) \qquad \forall e \in \EE.
\]
The maps $\Phi_e$ are bijections (Step~9) that all push the same prior $p(\zgt)$ to the same marginal $p(\zest)$.
\end{proof}

\section{Additional Cross-Embodiment Results on ALOHA AgileX}

\begin{table}[H]
\centering
\small
\setlength{\tabcolsep}{6pt}
\caption{Additional cross-embodiment results on Robotwin with \texttt{aloha-agilex} as the target embodiment under the low-data setting ($m=10$). Higher is better for SSIM, PSNR, and SSIM-L, while lower is better for MSE.}
\label{tab:aloha_main_results}
\begin{tabular}{lcccc}
\toprule
Method & SSIM $\uparrow$ & PSNR $\uparrow$ & MSE $\downarrow$ & SSIM-L $\uparrow$ \\
\midrule
Target-Only-GT     & 0.543 & 17.13 & 0.0194 & 0.476 \\
Shared-GT          & 0.789 & 19.51 & 0.0112 & 0.727 \\
SCAR-kl-grl        & \textbf{0.796} & \textbf{19.96} & \textbf{0.0101} & \textbf{0.734} \\
\bottomrule
\end{tabular}
\end{table}

\begin{table}[H]
\centering
\small
\setlength{\tabcolsep}{6pt}
\caption{Additional cross-task results on Robotwin with \texttt{aloha-agilex} as the target embodiment under the low-data setting ($m=10$). Models are trained on \texttt{place\_a2b\_left} and evaluated on the target embodiment under task shift. Higher is better for SSIM, PSNR, and SSIM-L, while lower is better for MSE.}
\label{tab:aloha_cross_task_results}
\begin{tabular}{lcccc}
\toprule
Method & SSIM $\uparrow$ & PSNR $\uparrow$ & MSE $\downarrow$ & SSIM-L $\uparrow$ \\
\midrule
Target-Only-GT     & 0.558 & 17.26 & 0.0188 & 0.494 \\
Shared-GT          & 0.790 & 19.77 & 0.0106 & 0.728 \\
SCAR-kl-grl        & \textbf{0.795} & \textbf{20.08} & \textbf{0.0098} & \textbf{0.731} \\
\bottomrule
\end{tabular}
\end{table}


\section{Ablation on the Causal VAE Encoder}

We further study the role of the pretrained causal VAE encoder in the IDM. Our full model infers latent action from the structured latent video representation produced by the frozen causal VAE, whereas the ablated variant removes this encoder and feeds the IDM with a weaker alternative visual representation. We evaluate both same-task performance under the low-data setting ($m=10$) and zero-shot cross-task generalization on the target embodiment.

\begin{table}[H]
\centering
\small
\setlength{\tabcolsep}{6pt}
\caption{Ablation on the causal VAE encoder under the low-data target-embodiment setting ($m=10$). Higher is better for SSIM, PSNR, and SSIM-L, while lower is better for MSE.}
\label{tab:causal_vae_ablation_main}
\begin{tabular}{lcccc}
\toprule
Method & SSIM $\uparrow$ & PSNR $\uparrow$ & MSE $\downarrow$ & SSIM-L $\uparrow$ \\
\midrule
SCAR                        & \textbf{0.752} & \textbf{18.17} & \textbf{0.0153} & \textbf{0.727} \\
SCAR w/o causal VAE encoder & 0.730 & 17.33 & 0.0185 & 0.707 \\
\bottomrule
\end{tabular}
\end{table}

\begin{table}[H]
\centering
\small
\setlength{\tabcolsep}{6pt}
\caption{Ablation on the causal VAE encoder for zero-shot cross-task generalization on Robotwin. Models are trained on \texttt{place\_a2b\_left} and evaluated on the target embodiment under task shift. Higher is better for SSIM, PSNR, and SSIM-L, while lower is better for MSE.}
\label{tab:causal_vae_ablation_cross_task}
\begin{tabular}{lcccc}
\toprule
Method & SSIM $\uparrow$ & PSNR $\uparrow$ & MSE $\downarrow$ & SSIM-L $\uparrow$ \\
\midrule
SCAR                        & \textbf{0.772} & \textbf{18.79} & \textbf{0.0132} & \textbf{0.747} \\
SCAR w/o causal VAE encoder & 0.739 & 17.46 & 0.0180 & 0.710 \\
\bottomrule
\end{tabular}
\end{table}

These results show that the pretrained causal VAE encoder plays an important role in learning transferable latent actions. Removing it leads to a consistent drop in both same-task and cross-task performance across all metrics. This supports our claim that a structured latent video space helps the IDM focus on transition-relevant action structure rather than low-level visual reconstruction, and provides a stronger foundation for cross-embodiment transfer.

\section{Ablation on the Pretrained FDM Prior}
\label{app:fdm_prior_ablation}

We further ablate the role of the pretrained generative prior in the FDM. In the full SCAR model, the FDM is initialized from pretrained Wan weights and predicts future latent video dynamics conditioned on the inferred latent action. The ablated variant removes this pretrained prior and trains the FDM from scratch under the same IDM and training objective. This comparison isolates whether the pretrained video generative backbone provides a useful dynamics prior for learning transferable latent actions.

\begin{table}[H]
\centering
\small
\setlength{\tabcolsep}{6pt}
\caption{Ablation on the pretrained FDM prior under the low-data target-embodiment setting ($m=10$). The full SCAR model initializes the FDM from a pretrained video generative backbone, while the ablated variant trains the FDM from scratch. Higher is better for SSIM, PSNR, and SSIM-L, while lower is better for MSE.}
\label{tab:fdm_prior_ablation_main}
\begin{tabular}{lcccc}
\toprule
Method & SSIM $\uparrow$ & PSNR $\uparrow$ & MSE $\downarrow$ & SSIM-L $\uparrow$ \\
\midrule
SCAR & \textbf{0.752} & \textbf{18.17} & \textbf{0.0153} & \textbf{0.727} \\
SCAR w/o pretrained FDM prior & 0.623 & 16.84 & 0.0207 & 0.594 \\
\bottomrule
\end{tabular}
\end{table}

\begin{table}[H]
\centering
\small
\setlength{\tabcolsep}{6pt}
\caption{Ablation on the pretrained FDM prior for zero-shot transfer-task generalization on Robotwin. Models are trained on \texttt{place\_a2b\_left} and evaluated on \texttt{place\_a2b\_right}. Higher is better for SSIM, PSNR, and SSIM-L, while lower is better for MSE.}
\label{tab:fdm_prior_ablation_transfer}
\begin{tabular}{lcccc}
\toprule
Method & SSIM $\uparrow$ & PSNR $\uparrow$ & MSE $\downarrow$ & SSIM-L $\uparrow$ \\
\midrule
SCAR & \textbf{0.772} & \textbf{18.79} & \textbf{0.0132} & \textbf{0.747} \\
SCAR w/o pretrained FDM prior & 0.646 & 17.354 & 0.0184 & 0.619 \\
\bottomrule
\end{tabular}
\end{table}

The results show that removing the pretrained FDM prior degrades both target-task prediction and transfer-task generalization. This suggests that the pretrained generative backbone is not merely an implementation choice, but provides a structured dynamics prior that helps the FDM model short-horizon visual transitions without forcing the latent action to absorb low-level reconstruction details. In contrast, training the FDM from scratch increases the burden on the latent representation and makes the learned action space more vulnerable to appearance and embodiment-specific shortcuts.
\section{Implementation Details}
\label{app:implementation_details}

\paragraph{Video preprocessing.}
Episodes are sampled at \(20\) FPS. Each training sample contains \(T=49\) frames, including \(17\) context frames and \(32\) future frames. Images are resized to \(128\times128\), normalized to \([-1,1]\), and encoded by the frozen Wan causal VAE.

\paragraph{Temporal alignment.}
The Wan causal VAE temporally compresses videos with stride \(s_{\mathrm{vae}}=4\). Thus, a \(T=49\)-frame window is mapped to \(F=13\) latent timesteps. The IDM operates on the latent-video sequence \(\bm{v}_{1:F}\), but produces frame-rate latent action tokens. The temporal groups are
\[
\{\bm{z}_1\},\ \{\bm{z}_{2:5}\},\ \{\bm{z}_{6:9}\},\ \ldots,\ \{\bm{z}_{46:49}\},
\]
where \(\bm{z}_1\) corresponds to the first frame and each later 4-token block corresponds to one VAE latent timestep. When comparing to raw actions, we use the transition-aligned part \(\bm{z}_{2:T}\) and re-index it as \(\bm{z}_{1:T-1}\). The action encoder \(A_{\psi}\) maps these frame-rate latent actions back to the VAE latent timeline using a stride-4 causal temporal convolution, producing \(\bm{c}_{1:F}\) for FDM conditioning.

\paragraph{Context conditioning and diffusion forcing.}
The \(17\) context frames correspond to \(F_{\mathrm{hist}}=5\) VAE latent tokens. During training, we use diffusion forcing~\cite{chen2024diffusion} in latent space by sampling a latent-time noise schedule \(\tau_{1:F}\). For each latent timestep \(f\),
\[
\tilde{\bm{v}}_{\tau,f}
=
(1-\sigma_{\tau_f})\bm{v}_f
+
\sigma_{\tau_f}\bm{\epsilon}_f,
\qquad
\bm{\epsilon}_f\sim\mathcal{N}(0,I).
\]
History and future tokens may receive different noise levels; with probability \(p_{\mathrm{clean}}\), history tokens are kept clean by setting their noise level to zero. The FDM predicts the flow target on the full latent sequence,
\[
\hat{\bm{u}}_{\tau}
=
g_{\theta}(\tilde{\bm{v}}_{\tau}, \tau_{1:F}, \bm{c}_{1:F}),
\qquad
\bm{u}_{\tau} = \bm{\epsilon} - \bm{v},
\]
and the reconstruction loss is averaged over latent timesteps. At inference time, the first \(F_{\mathrm{hist}}\) latent tokens are clamped to the clean target context latents, and only future tokens are generated.

\paragraph{IDM architecture.}
The IDM takes latent observation windows as input and outputs a diagonal Gaussian posterior over latent actions. It uses \(6\) transformer layers, hidden dimension \(256\), \(4\) attention heads, and latent action dimension \(d_z=64\).

\paragraph{FDM and action injection.}
The FDM is initialized from pretrained Wan weights, while the causal VAE remains frozen. Latent actions are projected by \(A_{\psi}\), implemented as a lightweight action encoder with a two-layer MLP, causal temporal convolution, and linear modulation head. The resulting action condition is injected through AdaLN modulation at all Wan transformer blocks. We train the Wan diffusion model and the action encoder.

\paragraph{Raw action processing.}
For Robotwin, embodiment-specific raw actions are padded to a shared \(16\)-dimensional action interface. For Procgen, raw-action baselines use the one-hot discrete action representation. Latent-action methods do not use raw action labels during latent action discovery; raw actions are used only for raw-action baselines and action-to-latent controller experiments.

\paragraph{Optimization.}
We sample diffusion timesteps \(\tau\) in continuous time from a uniform distribution and apply the Wan time-shift schedule. Models are trained with AdamW for \(10{,}000\) steps. We use learning rate \(5\times10^{-5}\) and weight decay \(10^{-4}\) for the IDM, and learning rate \(5\times10^{-6}\) and weight decay \(5\times10^{-2}\) for the FDM. The effective global batch size is \(16\), implemented with gradient accumulation on \(2\) NVIDIA A100 GPUs. We set \(\beta=5\times10^{-4}\), \(\lambda_{\mathrm{adv}}=5\times10^{-3}\), and GRL strength \(\alpha=0.25\).

\paragraph{Compute resources.}
All main experiments were trained on \(2\) NVIDIA A100 GPUs with 80GB memory each. Unless otherwise specified, each model was trained for \(10{,}000\) optimization steps with an effective global batch size of \(16\) using gradient accumulation.

\section{Action-Transfer Embodiment Leakage Evaluation}
\label{app:action_transfer_leakage}

We provide additional details for the embodiment-leakage evaluation used in Table~\ref{tab:action_transfer_leakage}. The goal of this evaluation is to quantify whether cross-embodiment action transfer preserves the target robot embodiment or leaks source-embodiment appearance.

\paragraph{Protocol.}
We use Franka as the target embodiment and transfer latent actions from three source embodiments:
\[
\{\texttt{aloha-agilex},\ \texttt{arx-x5},\ \texttt{ur5}\}.
\]
For each source embodiment, we sample 50 source-target pairs. Source trajectories are drawn from the source training split, while target visual contexts are drawn from the Franka evaluation split. For each pair, the IDM extracts a latent action sequence from the source trajectory, and the FDM generates a transferred rollout conditioned on the Franka context.

The generated video contains both target-context frames and predicted future frames. Since the context frames are copied from the target embodiment, including them would artificially inflate target-embodiment scores. We therefore evaluate only the predicted future segment. In our setting, the context length is \(17\) frames and the predicted future horizon is \(32\) frames.

\paragraph{Embodiment classifier.}
We train an independent single-frame embodiment classifier on ground-truth Robotwin frames from \texttt{place\_a2b\_left}. The classifier predicts one of four embodiment labels:
\[
\{\texttt{aloha},\ \texttt{arx},\ \texttt{franka},\ \texttt{ur5}\}.
\]
The classifier is a lightweight CNN with three convolutional blocks. Each block uses a \(3\times3\) convolution followed by ReLU activation, with channel widths \(32\), \(64\), and \(128\). The first two blocks are followed by \(2\times2\) max pooling, and the final feature map is reduced by global average pooling before a linear classifier. The model is trained on the training episodes of each embodiment and evaluated on held-out validation episodes. It achieves \(97.1\%\) validation top-1 accuracy. During leakage evaluation, the classifier is frozen and applied frame-by-frame to transferred videos.

\paragraph{Metrics.}
Let \(p_e(f)\) denote the classifier probability of embodiment \(e\) on predicted frame \(f\). For a source embodiment \(e_s\) and target embodiment \(e_t\), we compute
\begin{align}
\mathrm{SourceProb}
&=
\frac{1}{|\mathcal{F}|}
\sum_{f\in\mathcal{F}} p_{e_s}(f), \\
\mathrm{TargetProb}
&=
\frac{1}{|\mathcal{F}|}
\sum_{f\in\mathcal{F}} p_{e_t}(f),
\end{align}
where \(\mathcal{F}\) denotes the predicted future frames. We also report a two-way target share,
\begin{equation}
\mathrm{TargetShare}
=
\frac{\mathrm{TargetProb}}
{\mathrm{TargetProb}+\mathrm{SourceProb}},
\end{equation}
and the target-source margin,
\begin{equation}
\mathrm{TargetSource}
=
\mathrm{TargetProb}-\mathrm{SourceProb}.
\end{equation}
Finally, we report the analogous top-1 target-source margin by comparing the fraction of predicted frames classified as the target embodiment against the fraction classified as the source embodiment.

All reported leakage metrics are averaged over the three source embodiments.

\section{Additional Metrics and Standard Deviations}
\label{app:std_lpips_results}

We provide the full results with standard deviations over three runs and LPIPS. Lower is better for MSE and LPIPS, while higher is better for SSIM, PSNR, and SSIM-L.

\begin{table}[H]
\centering
\small
\setlength{\tabcolsep}{5pt}
\caption{Target-task results on Robotwin under the low-data target embodiment setting ($m=10$). Models are evaluated on held-out \texttt{place\_a2b\_left} episodes. Results are reported as mean {\scriptsize $\pm$ std} over three runs. Higher is better for SSIM, PSNR, and SSIM-L; lower is better for MSE and LPIPS.}
\label{tab:appendix_target_task_results}
\begin{tabular}{lccccc}
\toprule
Method 
& SSIM $\uparrow$ 
& PSNR $\uparrow$ 
& MSE $\downarrow$ 
& SSIM-L $\uparrow$ 
& LPIPS $\downarrow$ \\
\midrule
Target-Only-GT     
& 0.536 {\scriptsize $\pm$ 0.002}
& 15.69 {\scriptsize $\pm$ 0.11}
& 0.0270 {\scriptsize $\pm$ 0.0007}
& 0.511 {\scriptsize $\pm$ 0.002}
& 0.258 {\scriptsize $\pm$ 0.002} \\
Shared-GT          
& 0.713 {\scriptsize $\pm$ 0.002}
& 16.70 {\scriptsize $\pm$ 0.08}
& 0.0214 {\scriptsize $\pm$ 0.0004}
& 0.670 {\scriptsize $\pm$ 0.001}
& 0.142 {\scriptsize $\pm$ 0.002} \\
Target-Only-Latent 
& 0.536 {\scriptsize $\pm$ 0.001}
& 16.18 {\scriptsize $\pm$ 0.04}
& 0.0241 {\scriptsize $\pm$ 0.0002}
& 0.508 {\scriptsize $\pm$ 0.001}
& 0.246 {\scriptsize $\pm$ 0.001} \\
Shared-Latent      
& 0.743 {\scriptsize $\pm$ 0.001}
& 17.99 {\scriptsize $\pm$ 0.04}
& 0.0159 {\scriptsize $\pm$ 0.0001}
& 0.721 {\scriptsize $\pm$ 0.001}
& 0.110 {\scriptsize $\pm$ 0.001} \\
SCAR-kl            
& 0.752 {\scriptsize $\pm$ 0.006}
& 18.27 {\scriptsize $\pm$ 0.26}
& 0.0149 {\scriptsize $\pm$ 0.0009}
& 0.727 {\scriptsize $\pm$ 0.006}
& 0.106 {\scriptsize $\pm$ 0.004} \\
SCAR-grl           
& 0.745 {\scriptsize $\pm$ 0.003}
& 18.01 {\scriptsize $\pm$ 0.11}
& 0.0158 {\scriptsize $\pm$ 0.0004}
& 0.720 {\scriptsize $\pm$ 0.003}
& 0.110 {\scriptsize $\pm$ 0.002} \\
SCAR-kl-grl        
& \textbf{0.759} {\scriptsize $\pm$ 0.006}
& \textbf{18.49} {\scriptsize $\pm$ 0.21}
& \textbf{0.0142} {\scriptsize $\pm$ 0.0007}
& \textbf{0.735} {\scriptsize $\pm$ 0.004}
& \textbf{0.102} {\scriptsize $\pm$ 0.003} \\
\bottomrule
\end{tabular}
\end{table}

\begin{table}[H]
\centering
\small
\setlength{\tabcolsep}{5pt}
\caption{Transfer-task results on Robotwin under the low-data target embodiment setting ($m=10$). Models are trained on \texttt{place\_a2b\_left} and evaluated zero-shot on \texttt{place\_a2b\_right}. Results are reported as mean {\scriptsize $\pm$ std} over three runs. Higher is better for SSIM, PSNR, and SSIM-L; lower is better for MSE and LPIPS.}
\label{tab:appendix_transfer_task_results}
\begin{tabular}{lccccc}
\toprule
Method 
& SSIM $\uparrow$ 
& PSNR $\uparrow$ 
& MSE $\downarrow$ 
& SSIM-L $\uparrow$ 
& LPIPS $\downarrow$ \\
\midrule
Target-Only-GT     
& 0.573 {\scriptsize $\pm$ 0.001}
& 16.47 {\scriptsize $\pm$ 0.10}
& 0.0226 {\scriptsize $\pm$ 0.0005}
& 0.543 {\scriptsize $\pm$ 0.004}
& 0.241 {\scriptsize $\pm$ 0.001} \\
Shared-GT          
& 0.731 {\scriptsize $\pm$ 0.002}
& 17.14 {\scriptsize $\pm$ 0.06}
& 0.0193 {\scriptsize $\pm$ 0.0003}
& 0.683 {\scriptsize $\pm$ 0.003}
& 0.136 {\scriptsize $\pm$ 0.002} \\
Target-Only-Latent 
& 0.581 {\scriptsize $\pm$ 0.001}
& 17.01 {\scriptsize $\pm$ 0.02}
& 0.0199 {\scriptsize $\pm$ 0.0001}
& 0.544 {\scriptsize $\pm$ 0.001}
& 0.227 {\scriptsize $\pm$ 0.001} \\
Shared-Latent      
& 0.756 {\scriptsize $\pm$ 0.002}
& 18.26 {\scriptsize $\pm$ 0.10}
& 0.0149 {\scriptsize $\pm$ 0.0004}
& 0.729 {\scriptsize $\pm$ 0.004}
& 0.110 {\scriptsize $\pm$ 0.002} \\
SCAR-kl            
& 0.763 {\scriptsize $\pm$ 0.001}
& 18.51 {\scriptsize $\pm$ 0.05}
& 0.0141 {\scriptsize $\pm$ 0.0002}
& 0.738 {\scriptsize $\pm$ 0.002}
& 0.105 {\scriptsize $\pm$ 0.001} \\
SCAR-grl           
& 0.761 {\scriptsize $\pm$ 0.002}
& 18.42 {\scriptsize $\pm$ 0.09}
& 0.0144 {\scriptsize $\pm$ 0.0003}
& 0.733 {\scriptsize $\pm$ 0.004}
& 0.109 {\scriptsize $\pm$ 0.003} \\
SCAR-kl-grl        
& \textbf{0.770} {\scriptsize $\pm$ 0.001}
& \textbf{18.70} {\scriptsize $\pm$ 0.05}
& \textbf{0.0135} {\scriptsize $\pm$ 0.0002}
& \textbf{0.747} {\scriptsize $\pm$ 0.004}
& \textbf{0.101} {\scriptsize $\pm$ 0.001} \\
\bottomrule
\end{tabular}
\end{table}

\section{Broader Impact, Limitations, and Future Work}
\label{app:broader_impact_limitations}

\paragraph{Potential positive impacts.}
SCAR studies self-supervised continuous action representation learning for embodied world models. Its main contribution is a transferable latent action interface that reduces reliance on embodiment-specific raw command representations. If developed responsibly, such representations may improve data efficiency, support transfer across robot platforms, and enable more simulation-based testing before real-world deployment. This may benefit applications such as robot manipulation, laboratory automation, assistive robotics, and operation in remote or hazardous environments.

\paragraph{Potential negative impacts.}
The main risk is over-interpreting simulation-based prediction results as evidence of safe real-world control. SCAR is evaluated on world-model prediction and transfer diagnostics, not closed-loop deployment. If the learned latent interface were used in downstream robotic systems without additional validation, model errors or simulation-to-real mismatches could lead to unreliable behavior. 

\paragraph{Limitations and future work.}
Our experiments are conducted on controlled simulation/control benchmarks and focus on world-model prediction rather than closed-loop planning or real-robot execution. This setting helps isolate cross-embodiment action representation learning, but it does not fully capture the diversity, noise, and safety constraints of real-world robot deployment. Future work should scale SCAR to larger and more diverse real-robot datasets, study closed-loop planning and policy learning with the learned latent action interface.

\paragraph{Mitigation and scope.}
We do not claim that SCAR is ready for safety-critical deployment. Any downstream use should require task-specific safety constraints, real-world validation, monitoring under distribution shift, and human oversight. Released code and models should be accompanied by documentation of intended use, limitations, dataset scope, and evaluation conditions.

\end{document}